\DeclareMathOperator*{\Minimize}{minimize}
\DeclareMathOperator{\Diag}{diag}
\DeclareMathOperator{\Trace}{trace}
\DeclareMathOperator{\NormalDist}{\mathcal{N}}
\newcommand{\RR}{\mathbb{R}}
\newcommand{\CP}{\mathcal{P}}
\newcommand{\Card}[1]{\lvert#1\rvert}
\renewcommand{\Vec}[1]{\bm{#1}}
\newcommand{\RowVec}[1]{\bm{\bar{#1}}}
\newcommand{\Mat}[1]{\mathbf{#1}}
\newcommand{\Norm}[1]{\lVert#1\rVert}
\newcommand{\InProd}[2]{\langle#1,#2\rangle}
\jmlrheading{\null}{\null}{\null}{\null}{\null}{\null}{%
  Saibal De, Hadi Salehi and Alex Gorodetsky
}
\begin{document}

\title{%
  Efficient MCMC Sampling for Bayesian Matrix Factorization by Breaking
  Posterior Symmetries
}

\author{%
  \name Saibal De
  \email saibalde@umich.edu\\
  \addr Department of Mathematics\\
        University of Michigan\\
        Ann Arbor, MI 48109, USA
  \AND
  \name Hadi Salehi
  \email hsalehi@umich.edu\\
  \addr Department of Aerospace Engineering\\
        University of Michigan\\
        Ann Arbor, MI 48109, USA
  \AND
  \name Alex Gorodetsky
  \email goroda@umich.edu\\
  \addr Department of Aerospace Engineering\\
        University of Michigan\\
        Ann Arbor, MI 48109, USA
}

\editor{\null}

\maketitle

\begin{abstract}%
  Bayesian low-rank matrix factorization techniques have become an essential
  tool for relational data analysis and matrix completion. A standard approach
  is to assign zero-mean Gaussian priors on the columns or rows of factor
  matrices to create a conjugate system. This choice of prior leads to simple
  implementations; however it also  causes symmetries in the posterior
  distribution that can severely reduce the efficiency of Markov-chain
  Monte-Carlo (MCMC) sampling approaches. In this paper, we propose a simple
  modification to the prior choice that \emph{provably} breaks these symmetries
  and maintains/improves accuracy.  Specifically, we provide conditions that the
  Gaussian prior mean and covariance must satisfy so the posterior does not
  exhibit invariances that yield sampling difficulties.  For example, we show
  that using non-zero linearly independent prior means significantly lowers the
  autocorrelation of MCMC samples, and can also lead to lower reconstruction
  errors.
\end{abstract}

\begin{keywords}
  Matrix completion,
  low-rank matrix factorization,
  Bayesian inference,
  posterior symmetry breaking,
  Markov chain Monte-Carlo (MCMC)
\end{keywords}

\section{Introduction}
\label{sec:introduction}

The \emph{matrix completion} problem seeks to use partial observations of a
matrix to estimate missing entries. Formally, let $\Mat{X} \in \RR^{m \times n}$
be our target data matrix, and let $\Lambda \subseteq \{1, \ldots, m\} \times
\{1, \ldots, n\}$ be a set of matrix indices where the matrix element is
observed:
\begin{equation*}
  \Vec{y} = \CP_\Lambda(\Mat{X}) + \Vec{\eta}.
\end{equation*}
Here, $\CP_\Lambda : \RR^{m \times n} \to \RR^{\Card{\Lambda}}$,
$\Card{\Lambda}$ being the cardinality of the set $\Lambda$, is the linear
projection map
\begin{equation*}
  \CP_\Lambda(\Mat{X}) = (x_{\Vec{\lambda}} : \Vec{\lambda} = (\lambda_1,
  \lambda_2) \in \Lambda),
\end{equation*}
and $\Vec{\eta} \in \RR^{\Card{\Lambda}}$ is a vector of additive noises. Our
goal is then to recover the matrix $\Mat{X}$ from the observations $\Vec{y}$.
This problem commonly arises in many practical applications such as recommender
system design \citep{takacs2008investigation}, drug-target interaction
prediction \citep{yamanishi2010drug, zheng2013collaborative}, image inpainting
\citep{he2015total, li2020rank}, social network topology recovery
\citep{mahindre2019sampling} and sensor localization \citep{xue2019locating}.

\subsection{Related Works}

This matrix problem is naturally ill-posed, and obtaining robust and accurate
solutions requires imposing additional \textit{regularity} conditions on the
underlying data matrix such as sparsity or low-rank structures. In this paper,
we consider the problem of low-rank matrix completion, where we might attempt to
recover the matrix by nuclear norm minimization
\begin{equation}
  \label{eq:matrix-recovery-nuclear}
  \Minimize_{\Mat{X} \in \RR^{m \times n}} \quad \Norm{\Mat{X}}_* \quad
  \text{subject to} \quad \lVert \Vec{y} - \CP_{\Lambda}(\Mat{X}) \rVert \leq
  \delta,
\end{equation}
for some constant $\delta \geq 0$ that depends on the level of noise. Note that
the nuclear norm is the convex relaxation of rank of a matrix
\citep{candes2009exact}, hence the objective in the optimization problem
naturally encourages low-rank structure of the reconstruction. In
\citet{candes2009exact}, \citet{candes2010power} and \citet{candes2010matrix}
the authors establish that, under mild assumptions about certain incoherence
properties of $\Mat{X}$, solving \eqref{eq:matrix-recovery-nuclear} leads to
accurate recovery of the underlying data matrix with with surprisingly few
observations. In fact, in absence of noise (that is, when $\Vec{\eta} = \Vec{0}$
and $\delta = 0$) exact recovery is possible with high probability.

The standard solution method for optimization \eqref{eq:matrix-recovery-nuclear}
is semi-definite programming, which is expensive when the matrix sizes $m$ and
$n$ are large. In this setup, it is advantageous to explicitly use the low-rank
factorization
\begin{equation}
  \label{eq:factor-model}
  \Mat{X} = \Mat{A} \Mat{B}^\top, \quad \Mat{A} \in \RR^{m \times r}, \quad
  \Mat{B} \in \RR^{n \times r}, \quad r \ll \min\{m, n\},
\end{equation}
and solve the optimization problem
\begin{equation}
  \label{eq:matrix-factor-recovery-frobenius}
  \Minimize_{\Mat{A} \in \RR^{m \times r}, \Mat{B} \in \RR^{n \times r}} \quad
  \Norm{\Mat{A}}_F^2 + \Norm{\Mat{B}}_F^2 \quad \text{subject to} \quad \lVert
  \Vec{y} - \CP_{\Lambda}(\Mat{A} \Mat{B}^\top) \rVert \leq \delta,
\end{equation}
We can show that the optimization problems \eqref{eq:matrix-recovery-nuclear}
and \eqref{eq:matrix-factor-recovery-frobenius} are equivalent as long as the
estimated rank $r$ is chosen to be larger than the true rank
\citep{recht2010guaranteed}. We also consider the unconstrained version of
\eqref{eq:matrix-factor-recovery-frobenius}:
\begin{equation}
  \label{eq:optimization}
  \Minimize_{\Mat{A} \in \RR^{m \times r}, \Mat{B} \in \RR^{n \times r}}
  \Norm{\Vec{y} - \mathcal{P}_\Lambda(\Mat{A} \Mat{B}^\top)}_2^2 +
  \frac{\omega}{2} \Norm{\Mat{A}}_F^2 + \frac{\omega}{2} \Norm{\Mat{B}}_F^2.
\end{equation}
Generally speaking, these optimization-based approaches attempt to minimize the
distance between the observed entries and their corresponding low-rank
predictions while regularizing over the low-rank factors, and they serve as the
base of a wide class of methods for low-rank matrix recovery
\citep{srebro2005maximum, mnih2008probabilistic, davenport2016overview}.

In this paper we instead focus on probabilistic approaches for which the
solution also quantifies uncertainty in the predictions. Within this context the
objective function \eqref{eq:optimization} can easily be viewed as the negative
log-posterior in a Bayesian parameter estimation problem with data $\Vec{y}$ and
parameters $\Mat{A}$ and $\Mat{B}$---it corresponds to i.i.d.\ zero-mean
Gaussian priors on the entries of the factor matrices and observations corrupted
by Gaussian noise. \citet{lim2007variational} and \citet{raiko2007principal}
each apply variational Bayes approximations of this inference model to analyze
the Netflix prize challenge \citep{bennett2007netflix} to great success.
Further, \citet{nakajima2011theoretical} and \citet{nakajima2013global} develop
a theoretical framework to analyze the variational Bayes low-rank matrix
factorization.

A fundamental issue with the variational Bayes approach lies in one of its
modeling assumptions---namely the factors $\Mat{A}$ and $\Mat{B}$ are taken to
be independent. \citet{salakhutdinov2008bayesian} argues a fully Bayesian
framework, which does not need this assumption, can outperform the variational
models. Their proposed Markov-chain Monte-Carlo (MCMC) based approach is used in
several later works \citep{chen2014stochastic, ahn2015large}. More recently, it
has been adapted for recovering low-rank tensor factorizations as well
\citep{rai2014scalable, zhao2015bayesian1, zhao2015bayesian2}.

\subsection{Our Contributions}

In most of these variational and fully Bayesian inference setups, the priors and
observations models are the same---they use Gaussian priors with zero means on
the columns of the factor matrices, and assume observations are corrupted by
additive Gaussian noise. One of the reason for the popularity of this choice of
priors is rooted in the fact that it leads to simple analytical conditional
posteriors, which is ideal for devising a Gibbs MCMC sampler
\citep{alquier2014bayesian, alquier2015bayesian}. Unfortunately, this choice of
priors cannot fully mitigate the non-identifiability of the low-rank
factorization \eqref{eq:factor-model}---given any $r \times r$ non-singular
matrix $\Mat{W}$ we can construct the new factors
\begin{equation*}
  \Mat{\tilde{A}} = \Mat{A} \Mat{W} \text{ and } \Mat{\tilde{B}} = \Mat{B}
  \Mat{W}^{-\top} \implies \Mat{X} = \Mat{\tilde{A}} \Mat{\tilde{B}}^\top.
\end{equation*}
The effect of this \emph{invertible invariance} shows up in the Bayes posteriors
in the form of symmetries, as illustrated in Figure~\ref{fig:fig2}, where we
plot the joint posterior between various components of the factor matrices,
constructed from Hamiltonian Monte-Carlo (HMC) samples, for the fully observed
$4 \times 4$ rank-2 matrix described in Example~1
(Section~\ref{sec:numerical-results}).  Effective MCMC sampling from
distributions with such wide varying multi-modal and non-connected geometries
is, in general, a difficult task.

\begin{figure}
  \centering

  \includegraphics[trim=60 350 60 220, clip, width=\textwidth]{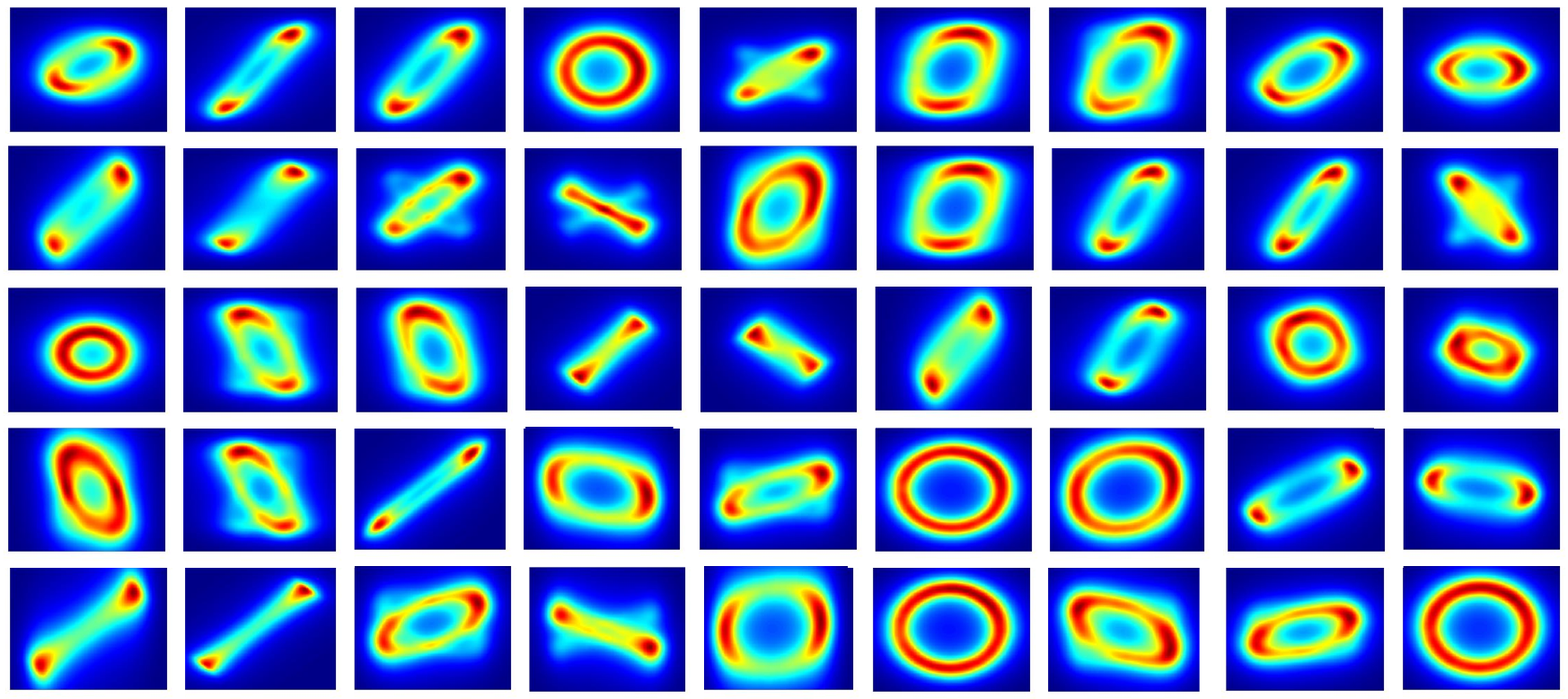}
  \caption{Joint posterior between some components of the factor matrices.
  Results obtained using Hamiltonian Monte Carlo with zero mean priors.}
  \label{fig:fig2}

  \includegraphics[trim=70 360 60 210, clip, width=\textwidth]{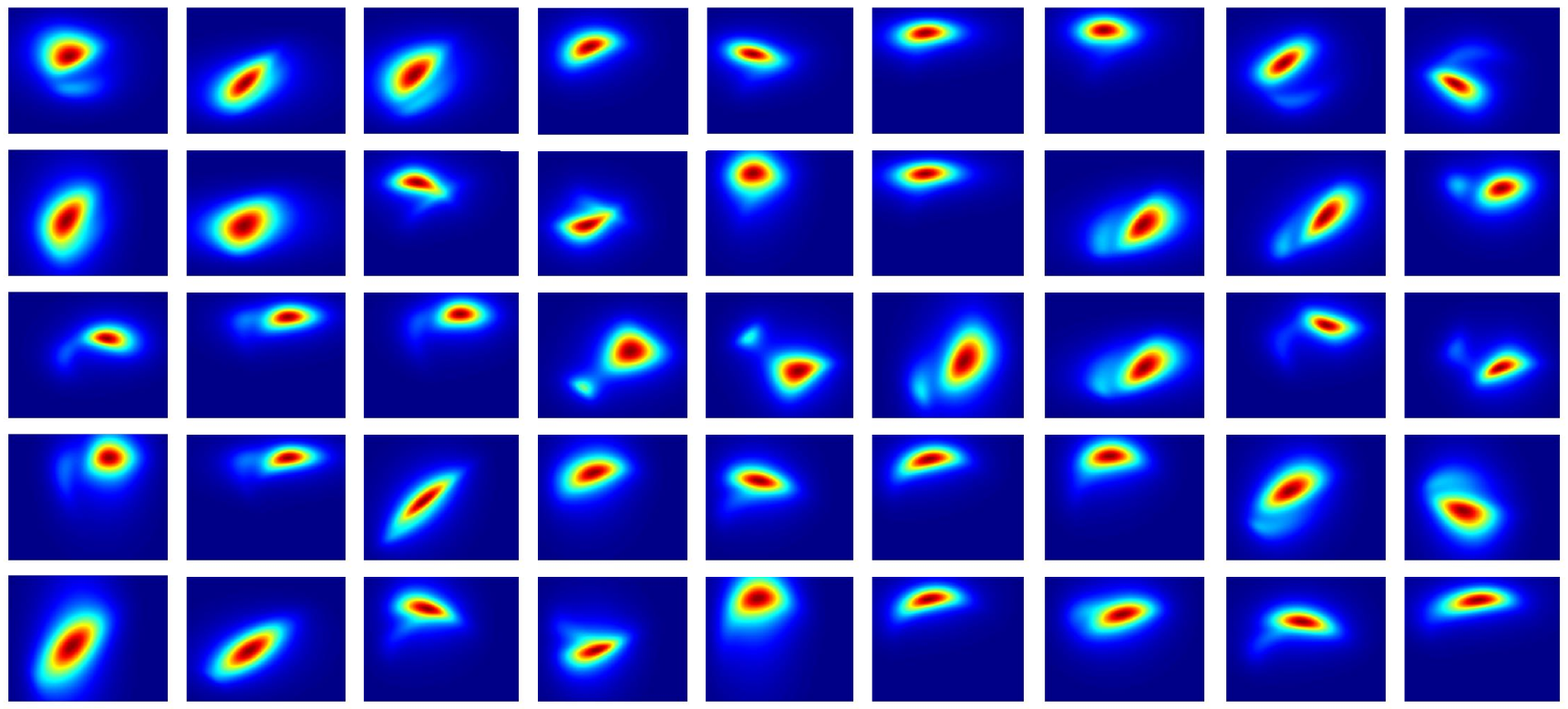}
  \caption{Joint posterior between some components of the factor matrices (same
  as in Figure~\ref{fig:fig2}). Results obtained using Hamiltonian Monte Carlo,
  but this time with non-zero mean priors.}
  \label{fig:fig3}
\end{figure}

The main goal of this paper is to advocate a simple change in the prior
specification, which does not break the local conjugacy required for Gibbs
samplers, but breaks the posterior symmetries. Our contributions are threefold:
\begin{itemize}
\item
  In Theorem~\ref{thm:posterior-symmetry}, we derive the exact conditions on the
  matrix $\Mat{W}$ such that the posterior, assuming zero-mean Gaussian priors, is
  invariant under the transformation $(\Mat{A}, \Mat{B}) \mapsto (\Mat{A}
  \Mat{W}, \Mat{B} \Mat{W}^{-\top})$.
\item
  In Corollary~\ref{cor:symmetry-breaking-sufficient} we prove that choosing a
  system of linearly independent prior means is sufficient for breaking this
  posterior symmetry. An illustration is given in Figure~\ref{fig:fig3}, where
  we plot the same set of joint posteriors as in Figure~\ref{fig:fig2}, but now
  constructed from HMC samples obtained with non-zero mean priors.
\item
  By breaking the symmetry via introducing non-zero mean priors, we counter the
  non-identifiability of low-rank factorizations. We demonstrate this ultimately
  leads to better performance for MCMC sampling algorithms on matrices
  constructed from both synthetic and real-world data. We observe up to an order
  of magnitude decrease in the autocorrelations of generated samples and
  corresponding improvement in reconstruction errors of the underlying data
  matrices in our numerical examples in Section~\ref{sec:numerical-results}.
\end{itemize}

Note that our proposed change in prior means is very simple to implement---one
needs to change at most a few lines of code in any existing applications.
Constructing the appropriate prior means is also straightforward. From random
matrix theory, we know that if the entries of a tall-and-thin matrix are sampled
i.i.d.\ from \emph{any} continuous random variable in $\RR$, then the columns of
the matrix form a linearly independent system with probability 1.

The rest of the paper is structured as follows. In
Section~\ref{sec:bayesian-setup}, we formally introduce the Bayesian inference
setup and quantify the symmetries arising from zero-mean Gaussian priors. In
Section~\ref{sec:symmetry-breaking}, we show that choosing the priors means to
be non-zero in a systematic fashion breaks the invertible invariance. In
Section~\ref{sec:numerical-results}, we present the numerical experiments.
Finally, in Section~\ref{sec:conclusion}, we present our concluding remarks and
some directions for future work.

\section{Notations and Bayesian Inference Setup}
\label{sec:bayesian-setup}

In this section, we introduce the notations we use throughout the rest of the
paper, and introduce our Bayesian inference setup for the low-rank matrix
factorization problem.

\subsection{Notations}
\label{sec:notation}

A vector $\Vec{x}$ is always represented as a column, a row-vector is
represented as $\Vec{x}^\top$. The vector $\Vec{e}_i$ is the $i$-th standard
basis of appropriate (and inferable from context) size---all but its $i$-th
entries are zero, and the non-zero entry is one.

Given a matrix $\Mat{X} \in \RR^{m \times n}$, we use $x_{ij}$ to denote its
$(i, j)$-th entry. Additionally, $\RowVec{x}_i \in \RR^{n}$ and $\Vec{x}_j \in
\RR^m$ denotes $i$-th row and the $j$-th column of the matrix; thus
\begin{equation*}
  \Mat{X} =
  \begin{bmatrix}
    x_{11} & \cdots & x_{1n} \\
    \vdots & \ddots & \vdots \\
    x_{m1} & \cdots & x_{mn}
  \end{bmatrix}
  =
  \begin{bmatrix}
    \Vec{x}_1 & \cdots & \Vec{x}_n
  \end{bmatrix}
  =
  \begin{bmatrix}
    \RowVec{x}_1^\top \\
    \vdots \\
    \RowVec{x}_m^\top
  \end{bmatrix}.
\end{equation*}
The matrix $\Mat{I}_n$ denotes the $n \times n$ identity matrix.

We adopt MATLAB's notation for indexing---given index sets $\Lambda_r \subseteq
\{1, \ldots, m\}$ and $\Lambda_c \subseteq \{1, \ldots, n\}$ we use
$\Mat{X}[\Lambda_r, \Lambda_c]$ to denote the intersection of the rows of
$\Mat{X} \in \RR^{m \times n}$ indexed by $\Lambda_r$ and columns indexed by
$\Lambda_c$. We also adopt the following slight abuses of notation:
\begin{align*}
  \Mat{X}[i, \Lambda_c] &= \Mat{X}[\{i\}, \Lambda_c], & \Mat{X}[:, \Lambda_c] &=
  \Mat{X}[\{1, \ldots, m\}, \Lambda_c], \\
  \Mat{X}[\Lambda_r, j] &= \Mat{X}[\Lambda_r, \{j\}], & \Mat{X}[\Lambda_r, :] &=
  \Mat{X}[\Lambda_r, \{1, \ldots, n\}].
\end{align*}

\subsection{Prior and Likelihood Models}
\label{sec:probability-model}

Given a low-rank factorization \eqref{eq:factor-model}, we impose
independent Gaussian priors on the columns of the factor matrices
\begin{equation*}
  \Vec{a}_k \sim \NormalDist(\Vec{\mu}_{a,k}, \tau_{a,k}^{-1} \Mat{I}_m), \quad
  \Vec{b}_k \sim \NormalDist(\Vec{\mu}_{b,k}, \tau_{b,k}^{-1} \Mat{I}_n).
\end{equation*}
The joint prior is then given by
\begin{equation*}
  p(\Mat{A}, \Mat{B}) = \prod_{k = 1}^r \NormalDist(\Vec{a}_k \mid
  \Vec{\mu}_{a,k}, \tau_{a,k}^{-1} \Mat{I}_m) \NormalDist(\Vec{b}_k \mid
  \Vec{\mu}_{b,k}, \tau_{b,k}^{-1} \Mat{I}_n).
\end{equation*}
We assume real-valued matrices and a standard additive Gaussian noise model
\begin{equation*}
  y_{\Vec{\lambda}} = x_{\Vec{\lambda}} + \eta_{\Vec{\lambda}}, \quad
  \eta_{\Vec{\lambda}} \sim \NormalDist(0, \tau_\eta^{-1}), \quad \Vec{\lambda}
  \in \Lambda.
\end{equation*}
Here $\Lambda$ is the set of location indices where the matrix was observed.
The corresponding likelihood is given by
\begin{equation}
  \label{eq:likelihood}
  p(\Vec{y} \mid \Mat{A}, \Mat{B}) = \prod_{\Vec{\lambda} \in \Lambda}
  \NormalDist(y_{\Vec{\lambda}} \mid \RowVec{a}_{\lambda_1}^\top
  \RowVec{b}_{\lambda_2}, \tau_\eta^{-1}).
\end{equation}
The effect of invertible invariance of the likelihood is immediately obvious:

\begin{proposition}
  \label{prop:likelihood-invariance}
  The likelihood $p(\Vec{y} \mid \Mat{A}, \Mat{B})$ defined in
  \eqref{eq:likelihood} is invariant under invertible transformations. In
  particular, if $\Mat{W} \in \RR^{r \times r}$ is an invertible matrix, then
  \begin{equation*}
    p(\Vec{y} \mid \Mat{A}, \Mat{B}) = p(\Vec{y} \mid \Mat{A} \Mat{W}, \Mat{B}
    \Mat{W}^{-\top}) \quad \text{for all} \quad \Mat{A} \in \RR^{m \times r}
    \text{ and } \Mat{B} \in \RR^{n \times r}.
  \end{equation*}
\end{proposition}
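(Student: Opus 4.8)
The plan is to exploit the fact that the likelihood in \eqref{eq:likelihood} depends on $\Mat{A}$ and $\Mat{B}$ only through the entrywise means $\RowVec{a}_{\lambda_1}^\top \RowVec{b}_{\lambda_2}$, which are precisely the entries of the product $\Mat{A} \Mat{B}^\top$ indexed by $\Lambda$. Since every Gaussian factor $\NormalDist(y_{\Vec{\lambda}} \mid \RowVec{a}_{\lambda_1}^\top \RowVec{b}_{\lambda_2}, \tau_\eta^{-1})$ carries the same fixed variance $\tau_\eta^{-1}$, it suffices to show that these means are unchanged under $(\Mat{A}, \Mat{B}) \mapsto (\Mat{A} \Mat{W}, \Mat{B} \Mat{W}^{-\top})$; the invariance of the entire product over $\Vec{\lambda} \in \Lambda$ then follows immediately.

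First I would record how the relevant rows transform. Writing $\Mat{\tilde{A}} = \Mat{A} \Mat{W}$ and $\Mat{\tilde{B}} = \Mat{B} \Mat{W}^{-\top}$, the $\lambda_1$-th row of $\Mat{\tilde{A}}$ is $\RowVec{\tilde{a}}_{\lambda_1}^\top = \RowVec{a}_{\lambda_1}^\top \Mat{W}$ and the $\lambda_2$-th row of $\Mat{\tilde{B}}$ is $\RowVec{\tilde{b}}_{\lambda_2}^\top = \RowVec{b}_{\lambda_2}^\top \Mat{W}^{-\top}$. Transposing the latter and using $(\Mat{W}^{-\top})^\top = \Mat{W}^{-1}$ gives $\RowVec{\tilde{b}}_{\lambda_2} = \Mat{W}^{-1} \RowVec{b}_{\lambda_2}$, so the transformed mean computes directly as
\begin{equation*}
  \RowVec{\tilde{a}}_{\lambda_1}^\top \RowVec{\tilde{b}}_{\lambda_2}
  = \RowVec{a}_{\lambda_1}^\top \Mat{W} \Mat{W}^{-1} \RowVec{b}_{\lambda_2}
  = \RowVec{a}_{\lambda_1}^\top \RowVec{b}_{\lambda_2}.
\end{equation*}
Because this identity holds for every $\Vec{\lambda} \in \Lambda$, each factor in the product defining the likelihood is identical before and after the transformation, which establishes the claim. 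Equivalently, one could argue at the matrix level and observe $\Mat{\tilde{A}} \Mat{\tilde{B}}^\top = \Mat{A} \Mat{W} \Mat{W}^{-1} \Mat{B}^\top = \Mat{A} \Mat{B}^\top$, so $\CP_\Lambda$ returns exactly the same predicted entries.

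I expect no substantive technical obstacle, since invertibility of $\Mat{W}$ is precisely what guarantees that $\Mat{W}^{-1}$ exists and the cancellation goes through. The only point requiring genuine care is bookkeeping of the transpose conventions: one must verify that applying the inverse-transpose to $\Mat{B}$ produces exactly the factor $\Mat{W}^{-1}$ needed to annihilate the $\Mat{W}$ introduced on the $\Mat{A}$ side, rather than some other combination such as $\Mat{W}^{-\top}$. This is exactly why the transformation pairs $\Mat{W}$ with $\Mat{W}^{-\top}$ rather than $\Mat{W}^{-1}$, and confirming this matching is the crux of the verification.
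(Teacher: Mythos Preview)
Your proposal is correct and follows essentially the same approach as the paper: both arguments reduce to the observation that $\Mat{\tilde{A}} \Mat{\tilde{B}}^\top = \Mat{A} \Mat{W} \Mat{W}^{-1} \Mat{B}^\top = \Mat{A} \Mat{B}^\top$, so every entry $\RowVec{a}_{\lambda_1}^\top \RowVec{b}_{\lambda_2}$ appearing in the likelihood is unchanged. The paper states this directly at the matrix level, while you additionally spell out the row-by-row computation, but there is no substantive difference.
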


\begin{proof}
  Let $\Mat{\tilde{A}} = \Mat{A} \Mat{W}$ and $\Mat{\tilde{B}} = \Mat{B}
  \Mat{W}^{-\top}$. Then we note
  \begin{equation*}
    \Mat{\tilde{A}} \Mat{\tilde{B}}^\top = (\Mat{A} \Mat{W}) (\Mat{B}
    \Mat{W}^{-\top})^\top = \Mat{A} \Mat{W} \Mat{W}^{-1} \Mat{B}^\top = \Mat{A}
    \Mat{B}^\top.
  \end{equation*}
  Since all the matrix entries appearing in \eqref{eq:likelihood} are the
  same for both $(\Mat{A}, \Mat{B})$ and $(\Mat{\tilde{A}}, \Mat{\tilde{B}})$,
  it follows that the likelihood is invariant.
\end{proof}

\subsection{The Posterior and its Symmetries}
\label{sec:posterior-symmetry}

Using Bayes rule, the posterior is
\begin{equation*}
  p(\Mat{A}, \Mat{B} \mid \Vec{y}) \propto p(\Mat{A}, \Mat{B}) p(\Vec{y} \mid
  \Mat{A}, \Mat{B}) = p(\Mat{A}) p(\Mat{B}) p(\Vec{y} \mid \Mat{A}, \Mat{B}).
\end{equation*}
The negative log posterior is therefore
\begin{equation}
  \label{eq:posterior}
  \begin{split}
    -\ln p(\Mat{A}, \Mat{B} \mid \Vec{y})
    &= \frac{m r}{2} \ln (2\pi) + \frac{m}{2} \sum_{k = 1}^r \ln
    \frac{1}{\tau_{a, k}} + \frac{1}{2} \sum_{k = 1}^r \tau_{a,k}
    \Norm{\Vec{a}_k - \Vec{\mu}_{a,k}}^2 \\
    &+ \frac{n r}{2} \ln (2\pi) + \frac{n}{2} \sum_{k = 1}^r \ln
    \frac{1}{\tau_{b, k}} + \frac{1}{2} \sum_{k = 1}^r \tau_{b,k}
    \Norm{\Vec{b}_k - \Vec{\mu}_{b,k}}^2 \\
    &+ \frac{\Card{\Lambda}}{2} \ln (2\pi) + \frac{\Card{\Lambda}}{2} \ln
    \frac{1}{\tau_\eta} + \frac{\tau_\eta}{2} \sum_{\Vec{\lambda} \in \Lambda}
    (y_{\Vec{\lambda}} - \RowVec{a}_{\lambda_1}^\top \RowVec{b}_{\lambda_2})^2 +
    \text{const.}
  \end{split}
\end{equation}
The first two lines correspond to the priors on $\Mat{A}$ and $\Mat{B}$, and
the first three terms of the final line to the likelihood. The constant
term corresponds to the evidence $p(\Vec{y})$ of the observations, and can
generally be ignored. We have already seen that the likelihood term is invariant
under invertible transformations of the form $(\Mat{A}, \Mat{B}) \mapsto
(\Mat{A} \Mat{W}, \Mat{B} \Mat{W}^{-\top})$.  Now we investigate any effect the
prior terms might have. We immediately note the following:

\begin{proposition}
  \label{prop:individual-invariance}
  The posterior corresponding to \eqref{eq:posterior} is invariant
  under invertible transformation $\Mat{W} \in \RR^{r \times r}$, i.e.
  \begin{equation*}
    p(\Mat{A}, \Mat{B} \mid \Vec{y}) = p(\Mat{A} \Mat{W}, \Mat{B}
    \Mat{W}^{-\top} \mid \Vec{y}) \quad \text{for all} \quad \Mat{A} \in \RR^{m
    \times r} \text{ and } \Mat{B} \in \RR^{n \times r},
  \end{equation*}
  if and only if the terms
  \begin{align*}
    f_1(\Mat{A}) &= \sum_{k = 1}^r \tau_{a,k} \Norm{\Vec{a}_k}^2,
                 &
    f_2(\Mat{A}) &= \sum_{k = 1}^r \tau_{a,k} \Vec{\mu}_{a,k}^\top \Vec{a}_k, \\
    f_3(\Mat{B}) &= \sum_{k = 1}^r \tau_{b,k} \Norm{\Vec{b}_k}^2,
                 &
    f_4(\Mat{B}) &= \sum_{k = 1}^r \tau_{b,k} \Vec{\mu}_{b,k}^\top \Vec{b}_k
  \end{align*}
  are individually invariant under the $\Mat{A} \mapsto \Mat{A} \Mat{W}$ and
  $\Mat{B} \mapsto \Mat{B} \Mat{W}^{-\top}$ transformations.
\end{proposition}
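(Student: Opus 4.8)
The plan is to reduce invariance of the full posterior to invariance of the prior contribution alone, and then to separate that contribution into its four constituent pieces. First I would observe that every term in the negative log posterior \eqref{eq:posterior} that does not involve $f_1,\dots,f_4$ is automatically invariant: the normalization constants do not depend on $\Mat{A}$ or $\Mat{B}$ at all; the squared-norm terms $\Norm{\Vec{\mu}_{a,k}}^2$ and $\Norm{\Vec{\mu}_{b,k}}^2$ produced by expanding $\Norm{\Vec{a}_k - \Vec{\mu}_{a,k}}^2$ and $\Norm{\Vec{b}_k - \Vec{\mu}_{b,k}}^2$ are likewise constant; and the data-fidelity term is invariant by Proposition~\ref{prop:likelihood-invariance}. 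Collecting the remaining $\Mat{A}$- and $\Mat{B}$-dependent part of \eqref{eq:posterior} up to an additive constant, it equals
\[
  \tfrac{1}{2}\bigl(f_1(\Mat{A}) - 2 f_2(\Mat{A})\bigr) + \tfrac{1}{2}\bigl(f_3(\Mat{B}) - 2 f_4(\Mat{B})\bigr),
\]
so invariance of the posterior is equivalent to invariance of this expression.

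The ``if'' direction is then immediate: if each $f_i$ is individually invariant, the displayed sum is invariant term by term. For the ``only if'' direction I would introduce the variations
\[
  \Delta_1(\Mat{A}) = f_1(\Mat{A}\Mat{W}) - f_1(\Mat{A}), \qquad \Delta_2(\Mat{A}) = f_2(\Mat{A}\Mat{W}) - f_2(\Mat{A}),
\]
together with $\Delta_3,\Delta_4$ defined analogously for $\Mat{B}$ under $\Mat{B} \mapsto \Mat{B}\Mat{W}^{-\top}$. Posterior invariance then reads
\[
  \bigl(\Delta_1(\Mat{A}) - 2\Delta_2(\Mat{A})\bigr) + \bigl(\Delta_3(\Mat{B}) - 2\Delta_4(\Mat{B})\bigr) = 0
\]
for all $\Mat{A}$ and $\Mat{B}$. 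Since the first grouped term depends only on $\Mat{A}$ and the second only on $\Mat{B}$, a separation-of-variables argument forces each group to be constant; evaluating at $\Mat{A} = \Mat{0}$ and $\Mat{B} = \Mat{0}$ shows that constant is zero.

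It remains to split each group into its two constituents, and this is where the key structural observation enters. The function $f_1$ is homogeneous of degree two in the entries of $\Mat{A}$, while $f_2$ is homogeneous of degree one; since $\Mat{A} \mapsto \Mat{A}\Mat{W}$ is linear, $\Delta_1$ inherits degree two and $\Delta_2$ degree one. I would therefore substitute $\Mat{A} \mapsto t\Mat{A}$ into $\Delta_1(\Mat{A}) - 2\Delta_2(\Mat{A}) = 0$ to obtain $t^2\,\Delta_1(\Mat{A}) - 2t\,\Delta_2(\Mat{A}) = 0$ for every scalar $t$; matching powers of $t$ forces $\Delta_1 \equiv 0$ and $\Delta_2 \equiv 0$ separately, which is exactly invariance of $f_1$ and of $f_2$. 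The identical scaling argument applied to the $\Mat{B}$ group yields invariance of $f_3$ and $f_4$.

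The main obstacle is precisely this final splitting step: separation of variables only isolates the $\Mat{A}$ contribution from the $\Mat{B}$ contribution, and does not on its own disentangle the quadratic piece $f_1$ from the linear piece $f_2$ inside a single group. The homogeneity argument is what makes this disentangling rigorous, and it is worth recording explicitly that the degrees of $f_1$ and $f_2$ genuinely differ, so that the $t^2$ and $t$ contributions cannot cancel against one another and must each vanish.
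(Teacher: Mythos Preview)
Your proof is correct and uses essentially the same idea as the paper: reduce to the prior contribution via Proposition~\ref{prop:likelihood-invariance}, then exploit the distinct homogeneity degrees of the $f_i$ to separate them. The only cosmetic difference is that the paper scales $\Mat{A}$ and $\Mat{B}$ simultaneously by independent parameters $t,s$ and reads off all four invariances from the resulting two-variable polynomial identity, whereas you first separate the $\Mat{A}$-group from the $\Mat{B}$-group by a separation-of-variables argument and then scale each group individually; both routes are equivalent.
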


We can establish this result by essentially using the homogeneity of the $f_1$,
$f_2$, $f_3$ and $f_4$ terms (a formal proof is presented in
Appendix~\ref{app:individual-invariance}). Clearly, the addition of prior
imposes further restrictions on $\Mat{W}$ for invertible invariance of the
posterior (compared to the invariance of the likelihood). In particular, with
zero-mean priors, the invariance exhibited by the likelihood under the
transformation $(\Mat{A}, \Mat{B}) \mapsto (\Mat{A}\Mat{W}, \Mat{B}
\Mat{W}^{-\top})$ holds for the posterior only when $\Mat{W}$ is restricted to a
very particular subclass of invertible matrices:

\begin{theorem}[Posterior symmetries with zero mean priors]
  \label{thm:posterior-symmetry}
  Let $\Vec{\mu}_{a,k} = \Vec{0}$ and $\Vec{\mu}_{b,k} = \Vec{0}$ for all $1
  \leq k \leq r$ and denote the diagonal matrices of the precision of the priors
  on the columns as
  \begin{equation*}
    \Mat{T}_a = \Diag(\tau_{a, 1}, \ldots, \tau_{a, r}), \quad \Mat{T}_b =
    \Diag(\tau_{b, 1}, \ldots, \tau_{b, r}).
  \end{equation*}
  Let $\{\Lambda_1, \ldots, \Lambda_q\}$ be a partition of $\{1, \ldots, r\}$
  defined by the following:
  \begin{equation*}
    k, k' \in \Lambda_\ell \iff \tau_{a, k} \tau_{b, k} = \tau_{a, k'} \tau_{b,
    k'}.
  \end{equation*}
  Then the posterior corresponding to \eqref{eq:posterior} is invariant
  under the $(\Mat{A}, \Mat{B}) \mapsto (\Mat{A} \Mat{W}, \Mat{B}
  \Mat{W}^{-\top})$ transformation with invertible $\Mat{W} \in \RR^{r \times
  r}$ if and only if we can decompose
  \begin{equation*}
    \Mat{W} = \Mat{T}_a^{1/2} \Mat{Q} \Mat{T}_a^{-1/2} = \Mat{T}_b^{-1/2}
    \Mat{Q} \Mat{T}_b^{1/2},
  \end{equation*}
  where $\Mat{Q}$ is orthogonal and block diagonal w.r.t.\ the partition
  $\{\Lambda_1, \ldots, \Lambda_q\}$, i.e. the sub-matrices
  \begin{equation*}
    \Mat{Q}[\Lambda_{\ell_1}, \Lambda_{\ell_2}] \text{ are }
    \begin{cases}
      \text{orthogonal} & \text{ if } \ell_1 = \ell_2 \\
      \text{zero} & \text{ if } \ell_1 \neq \ell_2
    \end{cases}.
  \end{equation*}
\end{theorem}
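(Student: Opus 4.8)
The plan is to reduce the claim to the individual–invariance characterization already established in Proposition~\ref{prop:individual-invariance}. With zero prior means we have $f_2 \equiv 0$ and $f_4 \equiv 0$, so by that proposition the posterior is invariant under $(\Mat{A}, \Mat{B}) \mapsto (\Mat{A}\Mat{W}, \Mat{B}\Mat{W}^{-\top})$ if and only if the two quadratic terms $f_1(\Mat{A}) = \sum_k \tau_{a,k}\Norm{\Vec{a}_k}^2 = \Trace(\Mat{A}\Mat{T}_a\Mat{A}^\top)$ and $f_3(\Mat{B}) = \sum_k \tau_{b,k}\Norm{\Vec{b}_k}^2 = \Trace(\Mat{B}\Mat{T}_b\Mat{B}^\top)$ are invariant under $\Mat{A}\mapsto\Mat{A}\Mat{W}$ and $\Mat{B}\mapsto\Mat{B}\Mat{W}^{-\top}$ respectively. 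My first step is therefore to convert these two functional invariances into algebraic identities on $\Mat{W}$. Writing $f_1(\Mat{A}\Mat{W}) = \Trace(\Mat{A}(\Mat{W}\Mat{T}_a\Mat{W}^\top)\Mat{A}^\top)$ and noting that the rows of $\Mat{A}$ may be chosen arbitrarily, the difference $\Trace(\Mat{A}(\Mat{W}\Mat{T}_a\Mat{W}^\top - \Mat{T}_a)\Mat{A}^\top)$ equals $\sum_\alpha \RowVec{a}_\alpha^\top(\Mat{W}\Mat{T}_a\Mat{W}^\top - \Mat{T}_a)\RowVec{a}_\alpha$ over the rows of $\Mat{A}$; this vanishes for every $\Mat{A}$ iff the symmetric matrix $\Mat{W}\Mat{T}_a\Mat{W}^\top - \Mat{T}_a$ is zero. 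Hence invariance of $f_1$ is equivalent to $\Mat{W}\Mat{T}_a\Mat{W}^\top = \Mat{T}_a$, and an identical argument applied to $f_3$ gives $\Mat{W}^\top\Mat{T}_b\Mat{W} = \Mat{T}_b$.

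Next I would pass to a pair of ``whitened'' matrices that linearize these two Lyapunov-type identities. Setting $\Mat{Q}_a = \Mat{T}_a^{-1/2}\Mat{W}\Mat{T}_a^{1/2}$ and $\Mat{Q}_b = \Mat{T}_b^{1/2}\Mat{W}\Mat{T}_b^{-1/2}$, a direct substitution shows that the first identity is equivalent to $\Mat{Q}_a\Mat{Q}_a^\top = \Mat{I}_r$ and the second to $\Mat{Q}_b^\top\Mat{Q}_b = \Mat{I}_r$; thus both $\Mat{Q}_a$ and $\Mat{Q}_b$ are orthogonal, and $\Mat{W} = \Mat{T}_a^{1/2}\Mat{Q}_a\Mat{T}_a^{-1/2} = \Mat{T}_b^{-1/2}\Mat{Q}_b\Mat{T}_b^{1/2}$. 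Equating the two expressions for $\Mat{W}$ and writing $\Mat{D} = (\Mat{T}_a\Mat{T}_b)^{1/2}$ for the diagonal matrix with entries $\sqrt{\tau_{a,k}\tau_{b,k}}$, I obtain the conjugacy relation $\Mat{Q}_b = \Mat{D}\Mat{Q}_a\Mat{D}^{-1}$.

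The crux of the argument, and the step I expect to be the main obstacle, is to show that this relation between two orthogonal matrices forces $\Mat{Q}_a$ and $\Mat{Q}_b$ to coincide and to be block diagonal with respect to the partition. The key observation is that substituting $\Mat{Q}_b = \Mat{D}\Mat{Q}_a\Mat{D}^{-1}$ into $\Mat{Q}_b^\top\Mat{Q}_b = \Mat{I}_r$ and using orthogonality of $\Mat{Q}_a$ yields $\Mat{Q}_a^\top\Mat{D}^2\Mat{Q}_a = \Mat{D}^2$, which (again by orthogonality of $\Mat{Q}_a$) is equivalent to the commutation relation $\Mat{Q}_a\Mat{D}^2 = \Mat{D}^2\Mat{Q}_a$. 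Since $\Mat{D}^2 = \Mat{T}_a\Mat{T}_b$ is diagonal with entries $\tau_{a,k}\tau_{b,k}$, comparing $(i,j)$ entries gives $[\Mat{Q}_a]_{ij}(\tau_{a,i}\tau_{b,i} - \tau_{a,j}\tau_{b,j}) = 0$, so $[\Mat{Q}_a]_{ij} = 0$ whenever $i$ and $j$ lie in different blocks of the partition; that is, $\Mat{Q}_a$ is block diagonal with orthogonal diagonal blocks. Because $\Mat{D}$ is constant on each block, $\Mat{Q}_a$ commutes not only with $\Mat{D}^2$ but with $\Mat{D}$ itself, whence $\Mat{Q}_b = \Mat{D}\Mat{Q}_a\Mat{D}^{-1} = \Mat{Q}_a =: \Mat{Q}$, giving exactly the claimed decomposition.

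For the converse I would run this chain backwards: given a block-diagonal orthogonal $\Mat{Q}$, block-constancy of $\Mat{D}$ makes the two expressions $\Mat{T}_a^{1/2}\Mat{Q}\Mat{T}_a^{-1/2}$ and $\Mat{T}_b^{-1/2}\Mat{Q}\Mat{T}_b^{1/2}$ equal, and a one-line computation recovers $\Mat{W}\Mat{T}_a\Mat{W}^\top = \Mat{T}_a$ and $\Mat{W}^\top\Mat{T}_b\Mat{W} = \Mat{T}_b$; invariance of the posterior then follows from Proposition~\ref{prop:individual-invariance}. I do not expect any of these verification computations to pose difficulty; the only genuinely nontrivial point is the commutation argument above, whose success hinges on exploiting orthogonality of \emph{both} whitened matrices simultaneously rather than either one alone.
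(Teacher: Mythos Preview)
Your proof is correct and follows essentially the same route as the paper: reduce via Proposition~\ref{prop:individual-invariance} to invariance of $f_1$ and $f_3$, show this forces the whitened matrices $\Mat{Q}_a = \Mat{T}_a^{-1/2}\Mat{W}\Mat{T}_a^{1/2}$ and $\Mat{Q}_b = \Mat{T}_b^{1/2}\Mat{W}\Mat{T}_b^{-1/2}$ to be orthogonal, derive a commutation relation with $\Mat{T}_a\Mat{T}_b$ to obtain the block-diagonal structure, and then check $\Mat{Q}_a = \Mat{Q}_b$ blockwise. The only notable difference is that you pass directly from the trace identity to $\Mat{W}\Mat{T}_a\Mat{W}^\top = \Mat{T}_a$ via the vanishing of a symmetric quadratic form, whereas the paper reaches the same orthogonality conclusion through a norm-preservation lemma applied to a one-row choice of $\Mat{A}$; your shortcut is slightly cleaner but otherwise equivalent.
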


We note that one direction of this theorem (that the above form of $\Mat{W}$ is
sufficient for the invariance of posterior) appears in
\citet{nakajima2011theoretical}. We claim this structure of the matrix $\Mat{W}$
is also necessary for invertible invariance as well; a formal proof is presented
in Appendix~\ref{app:posterior-symmetry}.

Two extreme cases of invariance with zero mean priors can be derived immediately
from this theorem:

\begin{corollary}
  Let $\Vec{\mu}_{a,k} = \Vec{0}$ and $\Vec{\mu}_{b,k} = 0$, $1 \leq k \leq r$.
  Further suppose $\tau_{a,1} = \cdots = \tau_{a,r}$ and $\tau_{b,1} = \cdots =
  \tau_{b,r}$. Then the posterior corresponding to
  \eqref{eq:posterior} is invariant under invertible transformation
  $\Mat{W}$ if and only if $\Mat{W}$ is orthogonal.
\end{corollary}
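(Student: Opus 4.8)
The plan is to read the result directly off Theorem~\ref{thm:posterior-symmetry}, since the zero-mean hypothesis $\Vec{\mu}_{a,k} = \Vec{\mu}_{b,k} = \Vec{0}$ places us exactly in its scope. First I would introduce the common precision values $\tau_a = \tau_{a,1} = \cdots = \tau_{a,r}$ and $\tau_b = \tau_{b,1} = \cdots = \tau_{b,r}$, so that the precision matrices appearing in the theorem become scalar multiples of the identity, $\Mat{T}_a = \tau_a \Mat{I}_r$ and $\Mat{T}_b = \tau_b \Mat{I}_r$. Next I would examine the partition $\{\Lambda_1, \ldots, \Lambda_q\}$ the theorem builds from the products $\tau_{a,k}\tau_{b,k}$: under the present hypotheses every such product equals the single value $\tau_a \tau_b$, so all indices collapse into one block, i.e.\ $q = 1$ and $\Lambda_1 = \{1, \ldots, r\}$.

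With this trivial partition settled, the second step is to interpret the theorem's conclusion. Block diagonality of $\Mat{Q}$ with respect to a one-block partition imposes no constraint beyond orthogonality of $\Mat{Q}$ itself, so the theorem asserts that the posterior is invariant precisely when $\Mat{W} = \Mat{T}_a^{1/2} \Mat{Q} \Mat{T}_a^{-1/2} = \Mat{T}_b^{-1/2} \Mat{Q} \Mat{T}_b^{1/2}$ for some orthogonal $\Mat{Q} \in \RR^{r \times r}$. The single observation that finishes the argument is that $\Mat{T}_a^{1/2} = \sqrt{\tau_a}\,\Mat{I}_r$ is a scalar matrix and therefore commutes with $\Mat{Q}$, so the similarity transform collapses: $\Mat{T}_a^{1/2} \Mat{Q} \Mat{T}_a^{-1/2} = \Mat{Q}$, and identically $\Mat{T}_b^{-1/2} \Mat{Q} \Mat{T}_b^{1/2} = \Mat{Q}$. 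Hence the condition on $\Mat{W}$ reduces to $\Mat{W} = \Mat{Q}$, and both the decomposition equalities are automatically consistent.

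Putting these together gives the claim in both directions at once: $\Mat{W}$ satisfies the invariance condition of Theorem~\ref{thm:posterior-symmetry} if and only if $\Mat{W} = \Mat{Q}$ for some orthogonal $\Mat{Q}$, which is the same as saying $\Mat{W}$ is orthogonal. I expect no genuine obstacle here—the reduction is entirely mechanical—and the only point warranting a moment of care is verifying that the equal-precision hypothesis simultaneously trivializes both the partition (so block diagonality imposes nothing) and the conjugation by $\Mat{T}_a^{1/2}$, $\Mat{T}_b^{1/2}$ (so the similarity transform leaves $\Mat{Q}$ unchanged). Both of these are immediate consequences of $\Mat{T}_a$ and $\Mat{T}_b$ being scalar.
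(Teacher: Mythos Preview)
Your proposal is correct and follows essentially the same approach as the paper's own proof: both note that the equal-precision hypothesis collapses the partition to a single block (so $\Mat{Q}$ is simply orthogonal) and that $\Mat{T}_a$ being a scalar multiple of the identity makes the conjugation $\Mat{T}_a^{1/2}\Mat{Q}\Mat{T}_a^{-1/2}$ reduce to $\Mat{Q}$, yielding $\Mat{W} = \Mat{Q}$.
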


\begin{proof}
  We have $\tau_{a, 1} \tau_{b, 1} = \cdots = \tau_{a, r} \tau_{b, r}$. Hence
  $\Mat{Q}$ only has one block. Consequently invertible invariance holds if and
  only if $\Mat{Q}$ is orthogonal (by Theorem~\ref{thm:posterior-symmetry}).
  Additionally, we can write $\Mat{T}_a = \tau_{a, 1} \Mat{I}_r$ where
  $\Mat{I}_r$ is the $r \times r$ identity matrix. It follows that
  \begin{equation*}
    \Mat{W} = \Mat{T}_a^{1/2} \Mat{Q} \Mat{T}_a^{-1/2} = \tau_{a, 1}^{1/2}
    \Mat{I}_r \Mat{Q} \tau_{a, 1}^{-1/2} \Mat{I}_r = \Mat{Q},
  \end{equation*}
  i.e.\ $\Mat{Q}$ is orthogonal if and only if $\Mat{W}$ is also orthogonal.
\end{proof}

\begin{corollary}
  Let $\Vec{\mu}_{a,k} = \Vec{0}$ and $\Vec{\mu}_{b,k} = 0$, $1 \leq k \leq r$.
  Further suppose $\tau_{a,1} \tau_{b,1}, \ldots, \tau_{a,r} \tau_{b, r}$ are
  all distinct. Then the posterior corresponding to
  \eqref{eq:posterior} is invariant under invertible transformation
  $\Mat{W}$ if and only if $\Mat{W}$ is diagonal with non-zero entries $\pm 1$.
\end{corollary}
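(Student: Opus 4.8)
The plan is to specialize Theorem~\ref{thm:posterior-symmetry} to the case where all products $\tau_{a,k}\tau_{b,k}$ are distinct, following the same template as the preceding corollary. First I would identify the partition $\{\Lambda_1, \ldots, \Lambda_q\}$ induced by the equivalence relation $k \sim k' \iff \tau_{a,k}\tau_{b,k} = \tau_{a,k'}\tau_{b,k'}$. Since by hypothesis these products are pairwise distinct, no two distinct indices are related, so the partition consists of $r$ singleton blocks; that is, $q = r$ and, after relabeling, $\Lambda_\ell = \{\ell\}$ for each $\ell$.

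Next I would translate the block-diagonality condition on $\Mat{Q}$ from Theorem~\ref{thm:posterior-symmetry}. With singleton blocks, the off-diagonal requirement $\Mat{Q}[\Lambda_{\ell_1}, \Lambda_{\ell_2}] = \Mat{0}$ for $\ell_1 \neq \ell_2$ forces every off-diagonal entry of $\Mat{Q}$ to vanish, so $\Mat{Q}$ is diagonal. Each diagonal block $\Mat{Q}[\{\ell\}, \{\ell\}]$ is a $1 \times 1$ orthogonal matrix, i.e.\ a scalar $q_\ell$ with $q_\ell^2 = 1$, hence $q_\ell = \pm 1$. Therefore $\Mat{Q} = \Diag(\pm 1, \ldots, \pm 1)$.

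Then I would recover $\Mat{W}$ from the decomposition $\Mat{W} = \Mat{T}_a^{1/2} \Mat{Q} \Mat{T}_a^{-1/2}$. Because $\Mat{Q}$, $\Mat{T}_a^{1/2}$ and $\Mat{T}_a^{-1/2}$ are all diagonal, they commute, so $\Mat{W} = \Mat{T}_a^{1/2} \Mat{T}_a^{-1/2} \Mat{Q} = \Mat{Q}$, which is exactly a diagonal matrix with entries $\pm 1$; the alternate expression $\Mat{W} = \Mat{T}_b^{-1/2} \Mat{Q} \Mat{T}_b^{1/2} = \Mat{Q}$ gives the same conclusion. This establishes the ``only if'' direction.

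For the converse, I would check that any diagonal $\Mat{W}$ with $\pm 1$ entries satisfies the hypotheses of Theorem~\ref{thm:posterior-symmetry}: taking $\Mat{Q} := \Mat{W}$, such a $\Mat{Q}$ is orthogonal and trivially block diagonal with respect to the singleton partition, and the commuting property of diagonal matrices again yields $\Mat{T}_a^{1/2} \Mat{Q} \Mat{T}_a^{-1/2} = \Mat{Q} = \Mat{W}$, so the decomposition holds and the posterior is invariant. The argument is essentially immediate given the theorem; there is no real obstacle beyond carefully noting that distinctness of the products collapses the partition into singletons, and that the diagonal factors $\Mat{T}_a^{\pm 1/2}$ commute with the diagonal $\Mat{Q}$, so the conjugation acts trivially.
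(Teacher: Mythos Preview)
Your proposal is correct and follows essentially the same approach as the paper's own proof: specialize Theorem~\ref{thm:posterior-symmetry} so that the partition collapses into singletons, observe that the $1\times 1$ orthogonal blocks force $\Mat{Q} = \Diag(\pm 1,\ldots,\pm 1)$, and then use commutativity of diagonal matrices to conclude $\Mat{W} = \Mat{T}_a^{1/2}\Mat{Q}\Mat{T}_a^{-1/2} = \Mat{Q}$. Your treatment of the converse is slightly more explicit than the paper's, but the argument is the same.
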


\begin{proof}
  It follows from Theorem~\ref{thm:posterior-symmetry} that $\Mat{Q}$ must be
  block diagonal with block size $1$, and the each block has to be $\pm 1$
  (these are the only $1 \times 1$ orthogonal matrices). Let us write $\Mat{Q} =
  \Diag(q_1, \ldots, q_r)$ with each $q_i = \pm 1$. Then we have
  \begin{equation*}
    \Mat{W} = \Diag(\tau_{a, 1}^{1/2}, \ldots, \tau_{a, r}^{1/2}) \Diag(q_1,
    \ldots, q_r) \Diag(\tau_{a, 1}^{-1/2}, \ldots, \tau_{a, r}^{-1/2}) =
    \Diag(q_1, \ldots, q_r).
  \end{equation*}
  It follows that invertible invariance holds if and only if $\Mat{W}$ is
  diagonal with entries $\pm 1$.
\end{proof}

These corollaries make it clear that under zero-mean priors, there are at the
very least $2^r$ symmetries in the posterior, corresponding to the $\Mat{W} =
\Diag(\pm 1, \ldots, \pm 1)$ transformation matrices.

\section{Breaking the Symmetries with Non-Zero Mean Priors}
\label{sec:symmetry-breaking}

It is clear from Proposition~\ref{prop:individual-invariance} and
Theorem~\ref{thm:posterior-symmetry} that the secret to
completely breaking the symmetries can only hide in the $f_2$ and $f_4$ terms,
which involve the prior means. This leads to our main result:

\begin{theorem}[Breaking posterior symmetries]
  \label{thm:symmetry-breaking}
  Let $\Mat{T}_a$, $\Mat{T}_b$ and $\{\Lambda_1, \ldots, \Lambda_q\}$ be as
  defined in the statement of Theorem~\ref{thm:posterior-symmetry}. Define the
  prior mean matrices
  \begin{equation*}
    \Mat{M}_a =
    \begin{bmatrix}
      \Vec{\mu}_{a, 1} & \cdots & \Vec{\mu}_{a, r}
    \end{bmatrix}
    \quad \text{and} \quad
    \Mat{M}_b =
    \begin{bmatrix}
      \Vec{\mu}_{b, 1} & \cdots & \Vec{\mu}_{b, r}
    \end{bmatrix}.
  \end{equation*}
  Then the posterior $p(\Mat{A}, \Mat{B} \mid \Vec{y})$ is not invariant under
  the $(\Mat{A}, \Mat{B}) \mapsto (\Mat{A} \Mat{W}, \Mat{B} \Mat{W}^{-\top})$
  transformation for any non-identity invertible $r \times r$ matrix $\Mat{W}$
  if and only if the matrices
  \begin{equation}
    \label{eq:symmetry-breaking-cond}
    \begin{bmatrix}
      \Mat{M}_a[:, \Lambda_\ell] \Mat{T}_a[\Lambda_\ell, \Lambda_\ell]^{1/2} \\
      \Mat{M}_b[:, \Lambda_\ell] \Mat{T}_b[\Lambda_\ell, \Lambda_\ell]^{1/2}
    \end{bmatrix}
  \end{equation}
  have full column rank for all $1 \leq \ell \leq q$.
\end{theorem}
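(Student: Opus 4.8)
The plan is to peel the problem apart using the additive structure already established. By Proposition~\ref{prop:individual-invariance}, the posterior is invariant under a given invertible $\Mat{W}$ precisely when all four terms $f_1$, $f_2$, $f_3$, $f_4$ are individually invariant. The terms $f_1$ and $f_3$ involve only the precisions and not the means, so their joint invariance is exactly the situation analyzed in Theorem~\ref{thm:posterior-symmetry}: $f_1$ and $f_3$ are both invariant if and only if $\Mat{W} = \Mat{T}_a^{1/2}\Mat{Q}\Mat{T}_a^{-1/2} = \Mat{T}_b^{-1/2}\Mat{Q}\Mat{T}_b^{1/2}$ for some orthogonal $\Mat{Q}$ that is block diagonal with respect to the partition $\{\Lambda_1,\ldots,\Lambda_q\}$, and this correspondence is a bijection with $\Mat{W}=\Mat{I}$ exactly when $\Mat{Q}=\Mat{I}$. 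Hence I can restrict attention to such admissible $\Mat{W}$ (any other $\Mat{W}$ already breaks invariance through $f_1$ or $f_3$) and reparametrize the whole question in terms of $\Mat{Q}$: the posterior fails to be invariant for every non-identity $\Mat{W}$ if and only if the only orthogonal, block-diagonal $\Mat{Q}$ for which $f_2$ and $f_4$ are \emph{also} invariant is $\Mat{Q}=\Mat{I}$.

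Second, I would compute the invariance conditions on $f_2$ and $f_4$ for such $\Mat{Q}$. Writing $f_2(\Mat{A}) = \Trace(\Mat{A}^\top \Mat{M}_a \Mat{T}_a)$ and $f_4(\Mat{B}) = \Trace(\Mat{B}^\top \Mat{M}_b \Mat{T}_b)$, invariance for all $\Mat{A}$ and $\Mat{B}$ is equivalent to the matrix identities $\Mat{M}_a\Mat{T}_a\Mat{W}^\top = \Mat{M}_a\Mat{T}_a$ and $\Mat{M}_b\Mat{T}_b\Mat{W}^{-1} = \Mat{M}_b\Mat{T}_b$. Substituting the first form of $\Mat{W}$ into the $\Mat{A}$-condition and the second form into the $\Mat{B}$-condition (this is precisely why Theorem~\ref{thm:posterior-symmetry} records both decompositions), the diagonal precision factors collapse and both reduce to the single clean requirement $\Mat{P}(\Mat{Q}-\Mat{I})=\Mat{0}$, where $\Mat{P}$ is the stacked $(m+n)\times r$ matrix with blocks $\Mat{M}_a\Mat{T}_a^{1/2}$ and $\Mat{M}_b\Mat{T}_b^{1/2}$. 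Because $\Mat{T}_a$ and $\Mat{T}_b$ are diagonal, the columns of $\Mat{P}$ indexed by $\Lambda_\ell$ are exactly the matrix appearing in \eqref{eq:symmetry-breaking-cond}.

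Third, I would exploit the block structure of $\Mat{Q}$. Since $\Mat{Q}-\Mat{I}$ is block diagonal, the columns of $\Mat{P}(\Mat{Q}-\Mat{I})$ indexed by $\Lambda_\ell$ equal $\Mat{P}[:,\Lambda_\ell](\Mat{Q}[\Lambda_\ell,\Lambda_\ell]-\Mat{I})$, so $\Mat{P}(\Mat{Q}-\Mat{I})=\Mat{0}$ decouples into per-block conditions $\Mat{P}[:,\Lambda_\ell](\Mat{Q}_\ell-\Mat{I})=\Mat{0}$ with each $\Mat{Q}_\ell = \Mat{Q}[\Lambda_\ell,\Lambda_\ell]$ orthogonal. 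Setting all but one block to the identity shows that a non-identity admissible $\Mat{Q}$ exists if and only if, for some $\ell$, there is a non-identity orthogonal $\Mat{Q}_\ell$ annihilating $\Mat{P}[:,\Lambda_\ell]$ on the right. It then remains to prove the single-block equivalence: no non-identity orthogonal $\Mat{Q}_\ell$ satisfies $\Mat{C}(\Mat{Q}_\ell-\Mat{I})=\Mat{0}$ (with $\Mat{C}=\Mat{P}[:,\Lambda_\ell]$) if and only if $\Mat{C}$ has full column rank. The forward direction is immediate: full column rank makes $\Mat{C}$ injective, so $\Mat{C}(\Mat{Q}_\ell-\Mat{I})=\Mat{0}$ forces $\Mat{Q}_\ell=\Mat{I}$.

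The crux, and the direction I expect to be the main obstacle, is the converse of this single-block equivalence: if $\Mat{C}$ is column-rank deficient I must manufacture a genuine non-identity orthogonal $\Mat{Q}_\ell$ with $\Mat{C}(\Mat{Q}_\ell-\Mat{I})=\Mat{0}$. The clean device is a Householder reflection: pick a unit vector $\Vec{v}$ in the (nontrivial) null space of $\Mat{C}$ and set $\Mat{Q}_\ell = \Mat{I} - 2\Vec{v}\Vec{v}^\top$, which is orthogonal, not equal to $\Mat{I}$, and satisfies $\Mat{C}(\Mat{Q}_\ell-\Mat{I}) = -2(\Mat{C}\Vec{v})\Vec{v}^\top = \Mat{0}$; one should check this also covers the $\Card{\Lambda_\ell}=1$ case, where it reproduces the sign flip $\Mat{Q}_\ell=-1$. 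Assembling the pieces and negating the chain of equivalences, the posterior is broken for every non-identity $\Mat{W}$ exactly when every block $\Mat{P}[:,\Lambda_\ell]$ has full column rank, which is the condition \eqref{eq:symmetry-breaking-cond}.
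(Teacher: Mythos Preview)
Your proposal is correct and follows essentially the same route as the paper: reduce via Proposition~\ref{prop:individual-invariance}, use Theorem~\ref{thm:posterior-symmetry} to constrain $\Mat{W}$ to the block-orthogonal form, translate invariance of $f_2,f_4$ into the per-block matrix equation $\Mat{P}_\ell \Mat{Q}_\ell^\top=\Mat{P}_\ell$ (equivalently your $\Mat{P}[:,\Lambda_\ell](\Mat{Q}_\ell-\Mat{I})=\Mat{0}$, since $\Mat{Q}_\ell$ is orthogonal), and finish with the Householder reflection for the rank-deficient direction, which is exactly the paper's Lemma~\ref{lem:uniqueness}. The only cosmetic difference is that you obtain the $f_2,f_4$ conditions compactly via the trace identities $f_2(\Mat{A})=\Trace(\Mat{A}^\top\Mat{M}_a\Mat{T}_a)$ and $f_4(\Mat{B})=\Trace(\Mat{B}^\top\Mat{M}_b\Mat{T}_b)$, whereas the paper derives the same equations coordinate-wise.
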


We postpone a formal proof until Appendix~\ref{app:symmetry-breaking}; for now,
we present an immediate corollary: it provides an extremely simple way to ensure
full rank matrices \eqref{eq:symmetry-breaking-cond}. It essentially
states that if the means are chosen to be linearly independent (e.g. the entries
of the mean matrix are sampled i.i.d.\ from a zero-mean Gaussian
distribution---the columns would be independent with probability 1), then the
invariance is broken.

\begin{corollary}
  \label{cor:symmetry-breaking-sufficient}
  Suppose either $\{\Vec{\mu}_{a,k} : 1 \leq k \leq r\}$ or $\{\Vec{\mu}_{b,k} :
  1 \leq k \leq r\}$ form a linearly independent set in $\RR^m$ or $\RR^n$. Then
  the posterior $p(\Mat{A}, \Mat{B} \mid \Vec{y})$ is not invariant under
  any non-identity invertible transformations.
\end{corollary}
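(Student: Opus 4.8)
The plan is to reduce the corollary directly to Theorem~\ref{thm:symmetry-breaking}: since that theorem already characterizes non-invariance by the full-column-rank condition \eqref{eq:symmetry-breaking-cond}, it suffices to verify that the linear independence hypothesis forces each stacked matrix there to have full column rank. I would handle the two cases symmetrically and argue, say, under the assumption that $\{\Vec{\mu}_{a,k} : 1 \leq k \leq r\}$ is linearly independent; the argument when $\{\Vec{\mu}_{b,k}\}$ is independent instead is identical after swapping the roles of the top and bottom blocks in \eqref{eq:symmetry-breaking-cond}.

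First I would observe that linear independence of the full set of prior means is precisely the statement that $\Mat{M}_a$ has full column rank $r$. The key elementary fact is that any subcollection of a linearly independent set remains linearly independent, so for each block $\Lambda_\ell$ of the partition the submatrix $\Mat{M}_a[:, \Lambda_\ell]$, formed by selecting the columns indexed by $\Lambda_\ell$, has full column rank $\Card{\Lambda_\ell}$.

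Next, because each precision $\tau_{a,k}$ is strictly positive, the diagonal matrix $\Mat{T}_a[\Lambda_\ell, \Lambda_\ell]^{1/2}$ is invertible, and right-multiplication by an invertible matrix preserves column rank; hence $\Mat{M}_a[:, \Lambda_\ell] \Mat{T}_a[\Lambda_\ell, \Lambda_\ell]^{1/2}$ still has full column rank $\Card{\Lambda_\ell}$. Finally, I would invoke the fact that stacking additional rows beneath a matrix cannot decrease its rank: the stacked matrix in \eqref{eq:symmetry-breaking-cond} has exactly $\Card{\Lambda_\ell}$ columns, and its top block alone already attains rank $\Card{\Lambda_\ell}$, so the full matrix has full column rank. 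Since this holds for every $1 \leq \ell \leq q$, Theorem~\ref{thm:symmetry-breaking} delivers the conclusion that the posterior is not invariant under any non-identity invertible transformation.

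There is no substantive obstacle here; the corollary is a clean consequence of the theorem together with three standard linear-algebra observations (subsets of independent vectors are independent, invertible right-multiplication preserves rank, and row-augmentation cannot lower rank). The only point requiring a moment of care is confirming that $\Mat{M}_a[:, \Lambda_\ell]$ really does select a \emph{subset} of the columns of $\Mat{M}_a$, so that ``independence of the subset'' is the right reading; this is immediate from the MATLAB-style indexing conventions of Section~\ref{sec:notation} and the fact that $\{\Lambda_1, \ldots, \Lambda_q\}$ partitions the column index set $\{1, \ldots, r\}$.
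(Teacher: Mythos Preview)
Your proof is correct and follows essentially the same approach as the paper's own proof: both reduce to Theorem~\ref{thm:symmetry-breaking} by showing that the top block $\Mat{M}_a[:,\Lambda_\ell]\Mat{T}_a[\Lambda_\ell,\Lambda_\ell]^{1/2}$ alone already has full column rank, whence the stacked matrix in \eqref{eq:symmetry-breaking-cond} does too. The only difference is that you spell out the three underlying linear-algebra facts (subsets of independent sets are independent, invertible right-multiplication preserves rank, row-augmentation cannot lower rank) more explicitly than the paper does.
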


\begin{proof}
  Suppose $\{\Vec{\mu}_{a,k} : 1 \leq k \leq r\}$ is a linearly independent set
  in $\RR^m$. Then $\Mat{M}_a[:, \Lambda_\ell]$, and consequently $\Mat{M}_a[:,
  \Lambda_\ell] \Mat{T}_a[\Lambda_\ell, \Lambda_\ell]^{1/2}$, are full rank for
  all $\ell$.  It follows that
  \begin{equation*}
    \Mat{P}_\ell =
    \begin{bmatrix}
      \Mat{M}_a[:, \Lambda_\ell] \Mat{T}_a[\Lambda_\ell, \Lambda_\ell]^{1/2} \\
      \Mat{M}_b[:, \Lambda_\ell] \Mat{T}_b[\Lambda_\ell, \Lambda_\ell]^{1/2}
    \end{bmatrix}
  \end{equation*}
  has full rank for all $\ell$, and Theorem~\ref{thm:symmetry-breaking} applies.
  A similar proof can be constructed when the prior means $\{\Vec{\mu}_{b,k} : 1
  \leq k \leq r\}$ form a linearly independent set in $\RR^n$.
\end{proof}

Thus, by carefully choosing non-zero means for the priors on matrix factors, we
can ensure that for any $r \times r$ invertible matrix $\Mat{W} \neq \Mat{I}$,
the posteriors $p(\Mat{A}, \Mat{B} \mid \Vec{y})$ and $p(\Mat{A} \Mat{W}^\top,
\Mat{B} \Mat{W}^{-\top} \mid \Vec{y})$ are distinct. In other words, with this
choice of prior distributions, we can keep the identifiability issue from
affecting Bayesian inference.

\section{Numerical Results}
\label{sec:numerical-results}

We now demonstrate that symmetry breaking improves MCMC sampling, both in terms
of efficiency (by decreasing autocorrelation) and accuracy (by reducing
reconstruction error), with four numerical experiments. Examples~1 and~2 apply
Bayesian matrix factorization with synthetic data, and Examples~3 and~4 work
with real-world data. For each example, the entries of the non-zero prior mean
matrices are sampled from a uniform distribution. We use the root mean squared
error (RMSE) as a measure of error between the true matrix $\Mat{X} \in \RR^{m
\times n}$ and its reconstruction $\hat{\Mat{X}} \in \RR^{m \times n}$:
\begin{equation*}
  \text{RMSE} = \sqrt{\frac{1}{mn} \sum_{i = 1}^m \sum_{j = 1}^n (x_{ij} -
  \hat{x}_{ij})^2}.
\end{equation*}

\subsection{Example 1: Fully Observed Synthetic Matrix}

We contrast the results obtained from running the HMC and Gibbs samplers on a
rank-2 4$\times$4 matrix
\begin{equation*}
  \Mat{X} =
  \begin{bmatrix}
    1 &  0 & 1 &  5 \\
    2 & -1 & 1 &  4 \\
    4 & -1 & 3 & 14 \\
    3 & -1 & 2 &  9
  \end{bmatrix},
\end{equation*}
with zero and non-zero mean priors; the non-zero prior means $\Mat{M}_a$ and
$\Mat{M}_b$ are constructed by sampling each of the entries from the
$\text{Uniform}(0, 1)$ distribution. We observe the full matrix with noise
precision $\tau_\eta = 10^4$, and MCMC results are obtained using both Gibbs and
HMC samplers that use 10 chains, each with 20000 samples. We also assume that the
precision is unknown, and follow the standard procedure of hierarchical Bayes by
inferring the precision with prior $\tau_\eta \sim \text{Ga}(3, 10^{-2})$.  This
leads to a conditionally conjugate posterior on $\tau_{\eta}$
\citep{alquier2014bayesian}.\footnote{Even though our theory was constructed
assuming $\tau_\eta$ is constant, it generalizes in a very straightforward
manner for arbitrary prior on $\tau_\eta$.} In Figure~\ref{fig:fig2} and
Figure~\ref{fig:fig3}, we plot various joint posteriors, obtained using the
samples generated using HMC, corresponding to the zero and non-zero mean priors.
We can clearly see that the symmetries of the posterior in Figure~\ref{fig:fig2}
(corresponding to the zero mean priors) are not observed when using non-zero
mean priors in Figure~\ref{fig:fig3}.  This symmetry-breaking leads to better
performance for MCMC samplers. For example, in Figure~\ref{fig:fig1}, we plot
the autocorrelations for factor $a_{1,1}$ of the samples generated from both HMC
and Gibbs samplers. The autocorrelations are significantly lower, for both
samplers, when non-zero mean priors are used.  This indicates using non-zero
mean priors leads to better/faster mixing and smaller
autocorrelations---regardless of algorithm used because the geometry of the
posteriors are more favorable to the types of structures exploited by virtually
all MCMC techniques.  Further, in Figure~\ref{fig:fig4}, we plot the histogram
of RMSE values of the matrix reconstruction for 1000 repetitions of the data
gathering procedure---each experiment differs due to the noise realization and
randomly sampled prior mean. For each of the repetitions we run 10 chains of the
Gibbs sampler. Here the RMSE is on the order of $10^{-2}$ for both
samplers---this is exactly what we expected because it is the order of
the noise standard deviation. We can clearly see that, for this example, there
is little difference in the RMSE based on the prior mean.

\begin{figure}
  \centering
  \begin{subfigure}{0.49\textwidth}
    \includegraphics[width=0.9\linewidth]{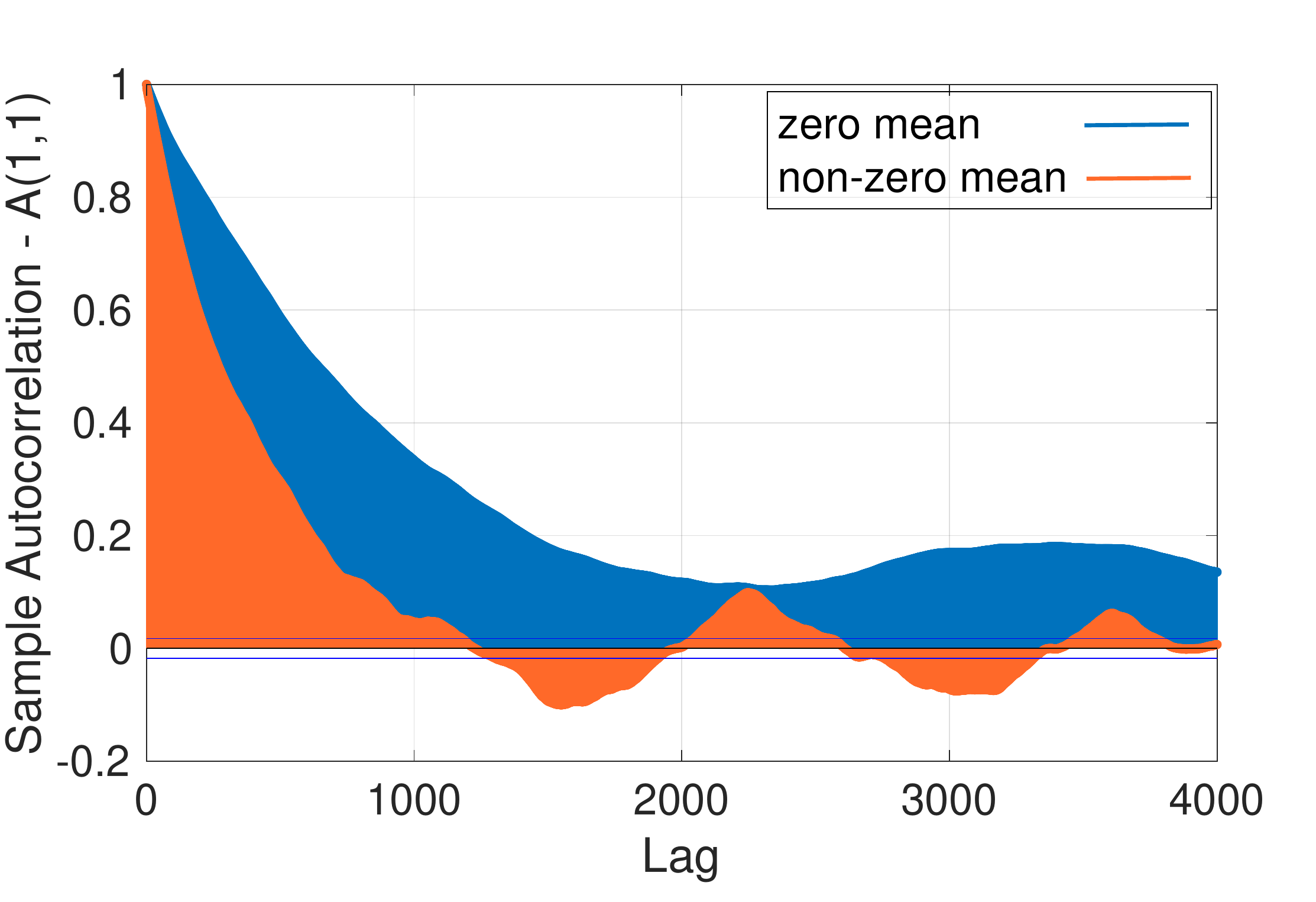}
    \caption{Gibbs}
    \label{fig:1}
  \end{subfigure}\hfil
  \begin{subfigure}{0.49\textwidth}
    \includegraphics[width=0.9\linewidth]{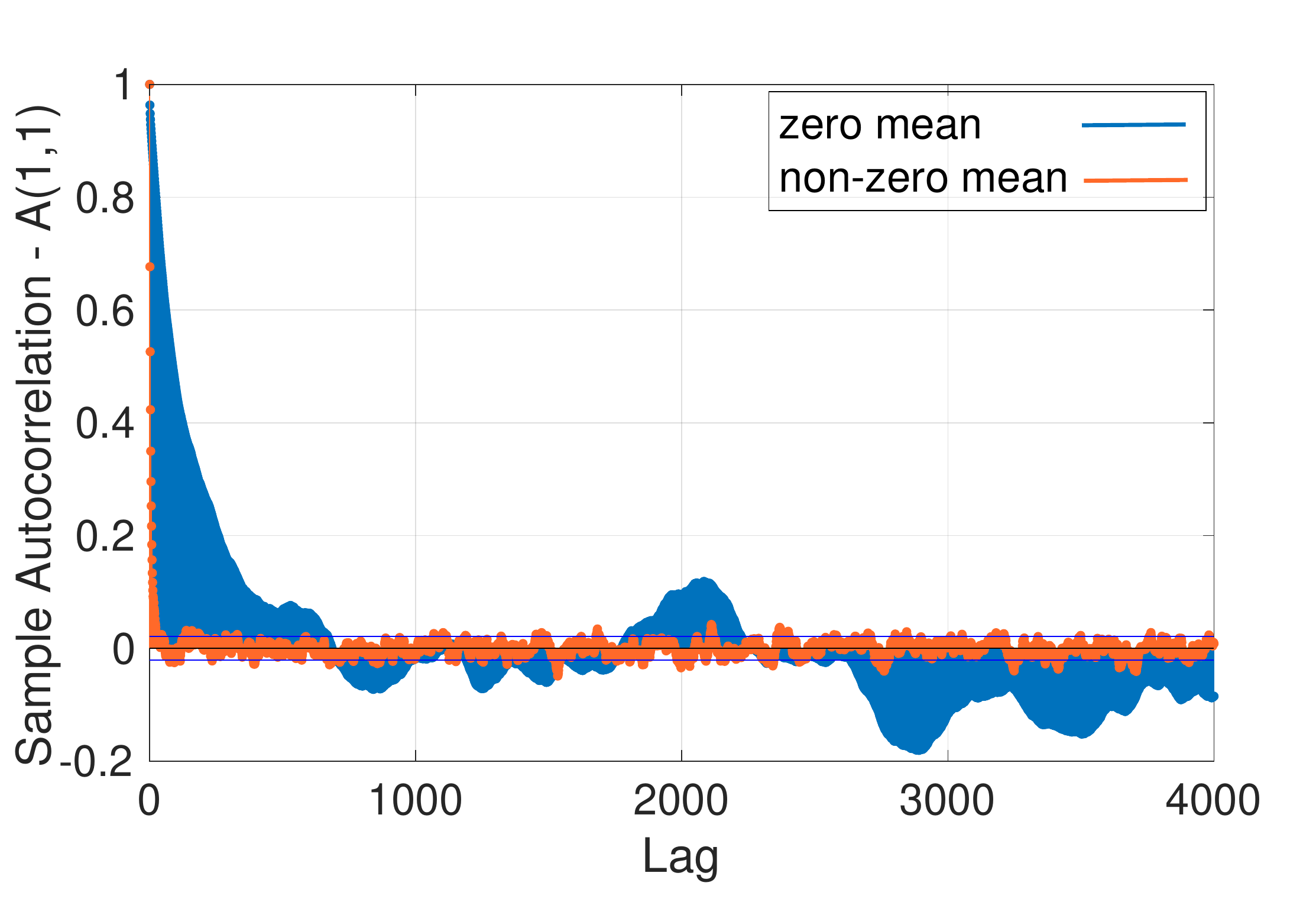}
    \caption{HMC}
    \label{fig:2}
  \end{subfigure}\hfil
  \caption{Autocorrelation for factor $a_{1,1}$ in Example~1 with zero and
  non-zero mean priors, computed using 10th chain of Gibbs and HMC samplers.}
  \label{fig:fig1}

  \includegraphics[width=0.45\textwidth]{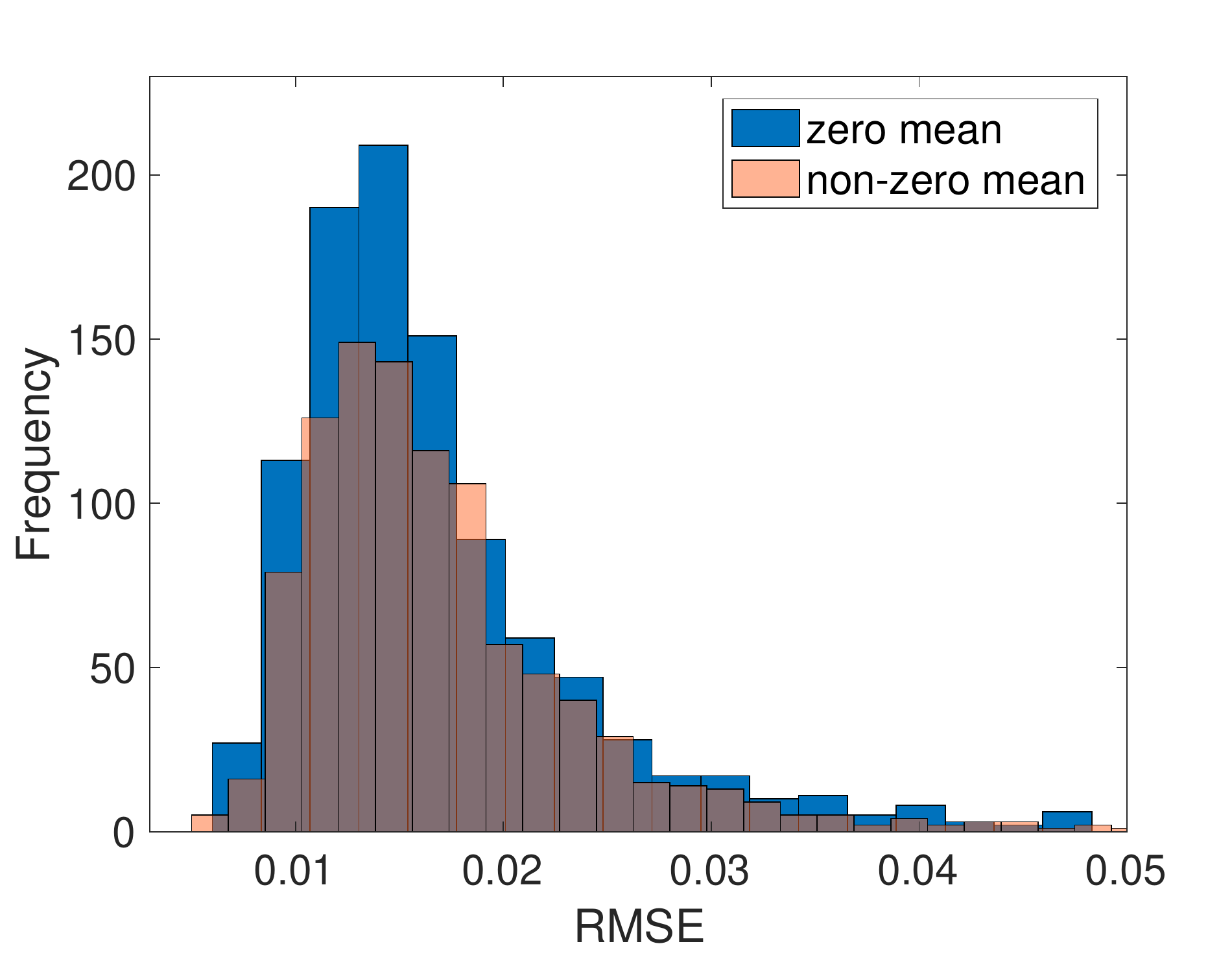}
  \caption{Reconstruction RMSE in Example~1 with zero and non-zero mean priors.}
  \label{fig:fig4}
\end{figure}

\subsection{Example 2: Partially Observed Synthetic Matrix}

As we pointed out earlier, we often do not have access to observations of all
entries of a matrix. To demonstrate the utility of our theory in context of
partial observations, we use the Gibbs sampler to reconstruct a rank-5 $100
\times 100$ matrix from observing only 20\% of its entries. Sampling is
performed with 5 chains, each with 5000 samples, and non-zero prior means are
once again sampled from the $\text{Uniform}(0, 1)$ distribution. The prior on the
precision is the same as the previous example.  We show the posterior
predictions for some elements in Figure~\ref{fig:fig5} corresponding to zero and
non-zero mean priors; results are presented for same elements in
Figure~\ref{fig:fig5}(a) and Figure~\ref{fig:fig5}(b). We can clearly see that
using non-zero mean priors results in better MCMC performance. The corresponding
order-of-magnitudes improvement in autocorrelation of the factor $b_{50,4}$ by
choosing non-zero mean priors is showcased in Figure~\ref{fig:fig6}. The RMSE
histogram constructed from 50 repetitions of the experiment is shown in
Figure~\ref{fig:fig7}---here we see that using non-zero mean priors leads to
better reconstruction errors.

\begin{figure}
  \centering
  \begin{subfigure}{0.48\textwidth}
    \includegraphics[trim=200 522 190 88, clip, width=0.9\linewidth]{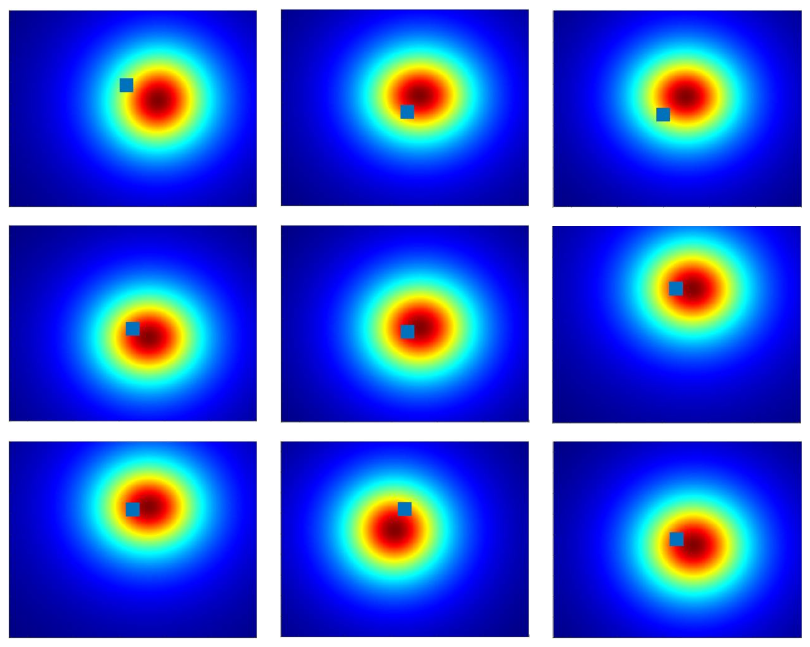}
    \caption{Zero mean priors}
    \label{fig:3}
  \end{subfigure}
  ~
  \begin{subfigure}{0.48\textwidth}
    \includegraphics[trim=200 535 190 75, clip, width=0.9\linewidth]{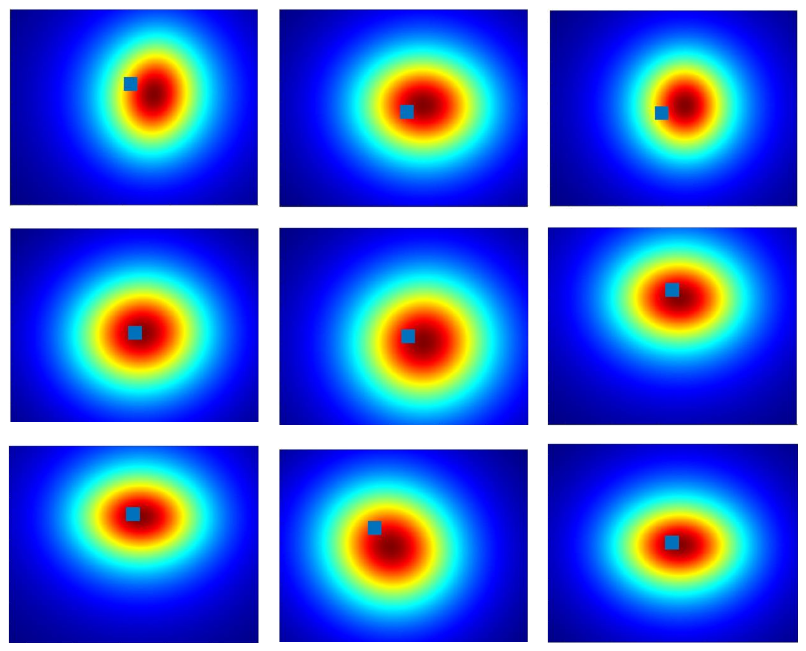}
    \caption{Non-zero mean priors}
    \label{fig:4}
  \end{subfigure}\hfil
  \caption{Posterior predictions for some elements in Example~2 with Gibbs
  sampler. Blue marks indicate truth.}
  \label{fig:fig5}

  \begin{minipage}{0.5\textwidth}
    \centering
    \includegraphics[width=0.95\linewidth]{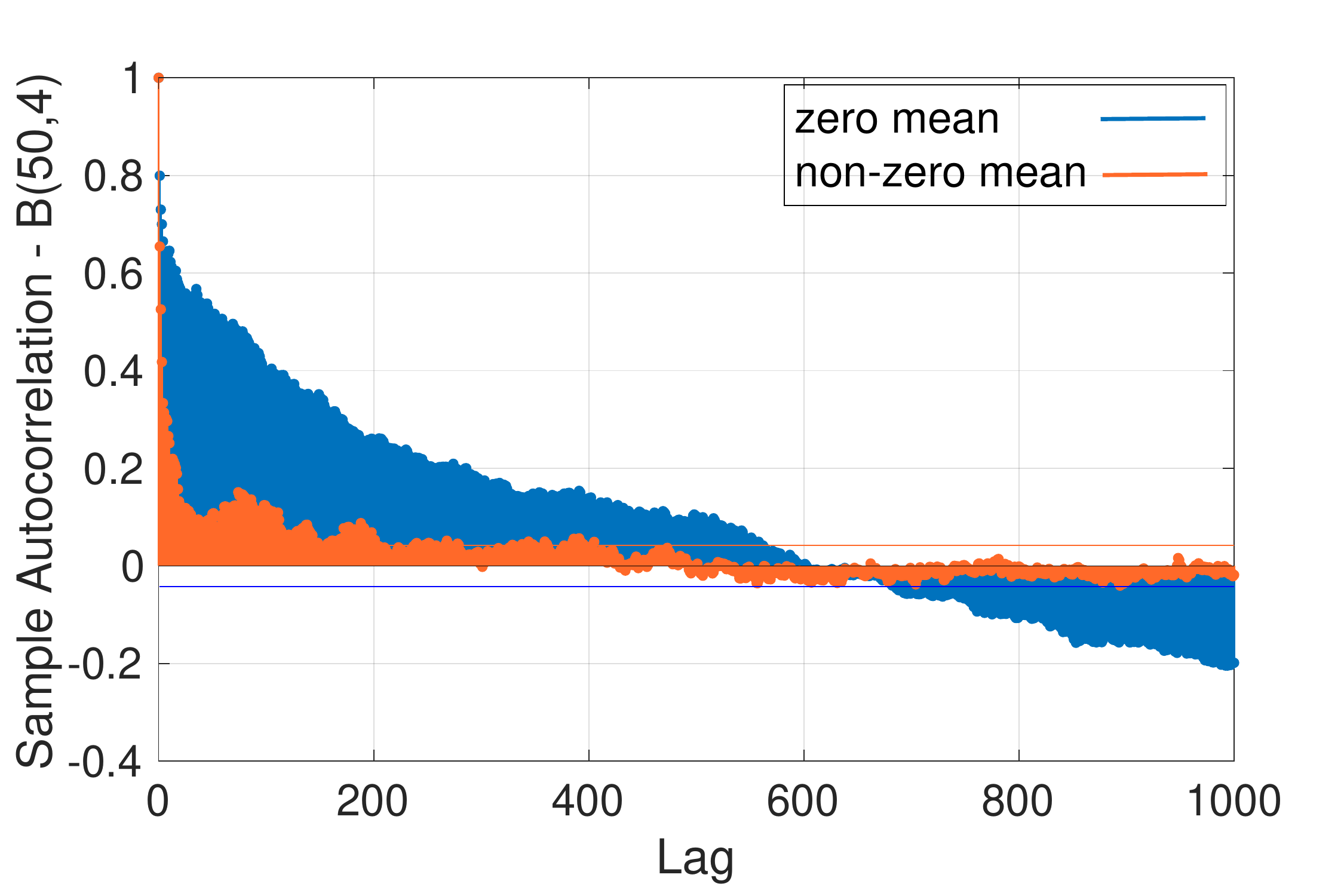}
    \caption{Autocorrelation for factor $b_{50,4}$ in Example~2 corresponding to
    zero and non-zero mean priors, computed with 5th chain of Gibbs sampler.}
    \label{fig:fig6}
  \end{minipage}
  ~
  \begin{minipage}{0.43\textwidth}
    \centering
    \includegraphics[width=0.9\linewidth]{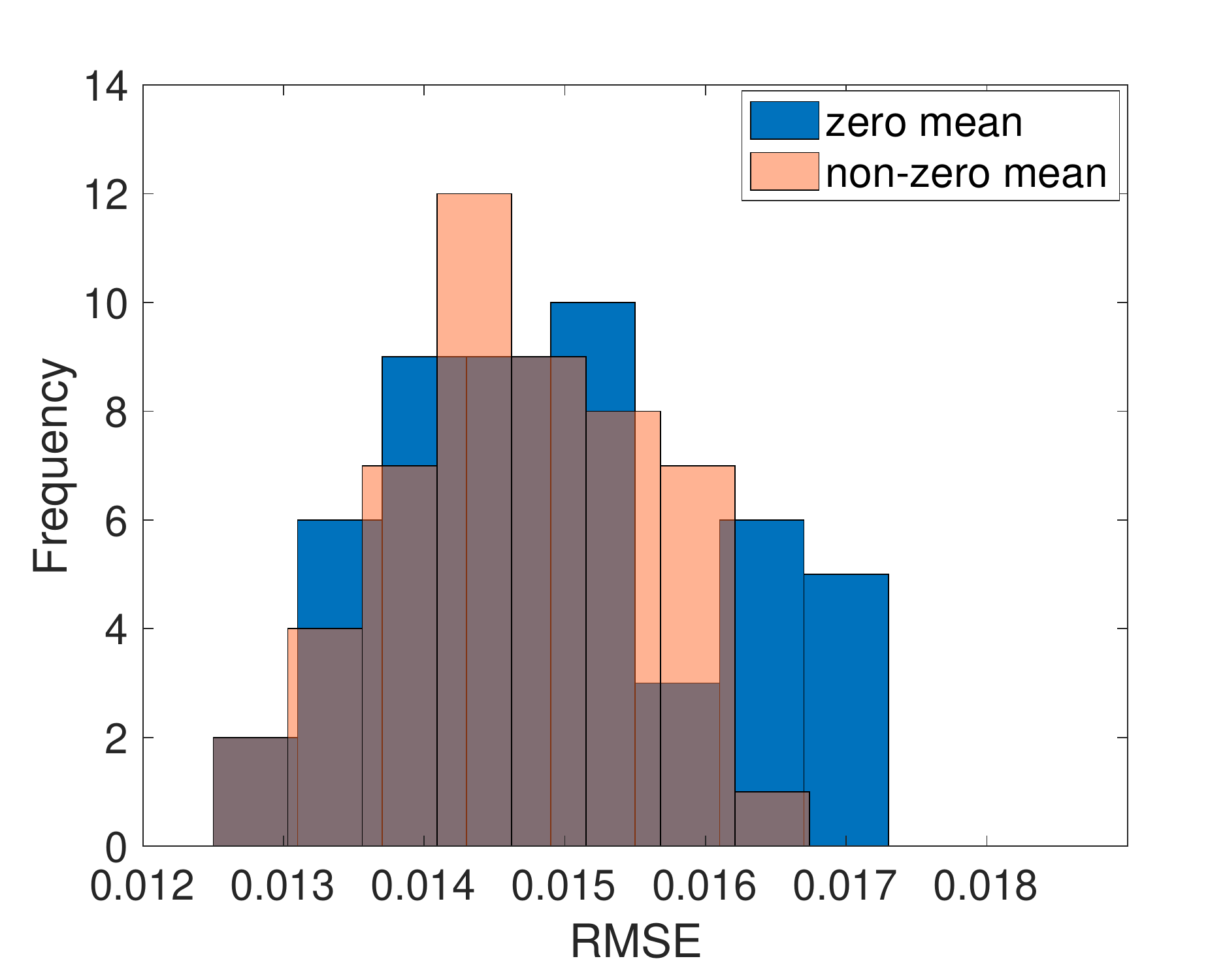}
    \caption{Reconstruction RMSE in Example~2 with zero and non-zero mean
    priors.}
    \label{fig:fig7}
  \end{minipage}
\end{figure}

\subsection{Example 3: Impaired Driving Dataset}

We now apply our theory to reconstruct the Impaired Driving Death Rate by Age
and Gender data set available publicly from CDC.%
\footnote{https://data.cdc.gov/Motor-Vehicle/Impaired-Driving-Death-Rate-by-Age-and-Gender-2012/ebbj-sh54}
We assume 60\% of the data is observed. Gibbs sampling is initiated with 5
chains, each with 10000 samples. Identical setup as in previous examples is
used for this experiment. We demonstrate that introducing non-zero mean priors
improves the autocorrelation for the $a_{20,1}$ and $a_{26,7}$ in
Figure~\ref{fig:fig8}. Again we see significant improvement in efficiency in
the same MCMC sampler.

\begin{figure}
  \centering
  \begin{subfigure}{0.49\textwidth}
    \centering
    \includegraphics[width=0.9\linewidth]{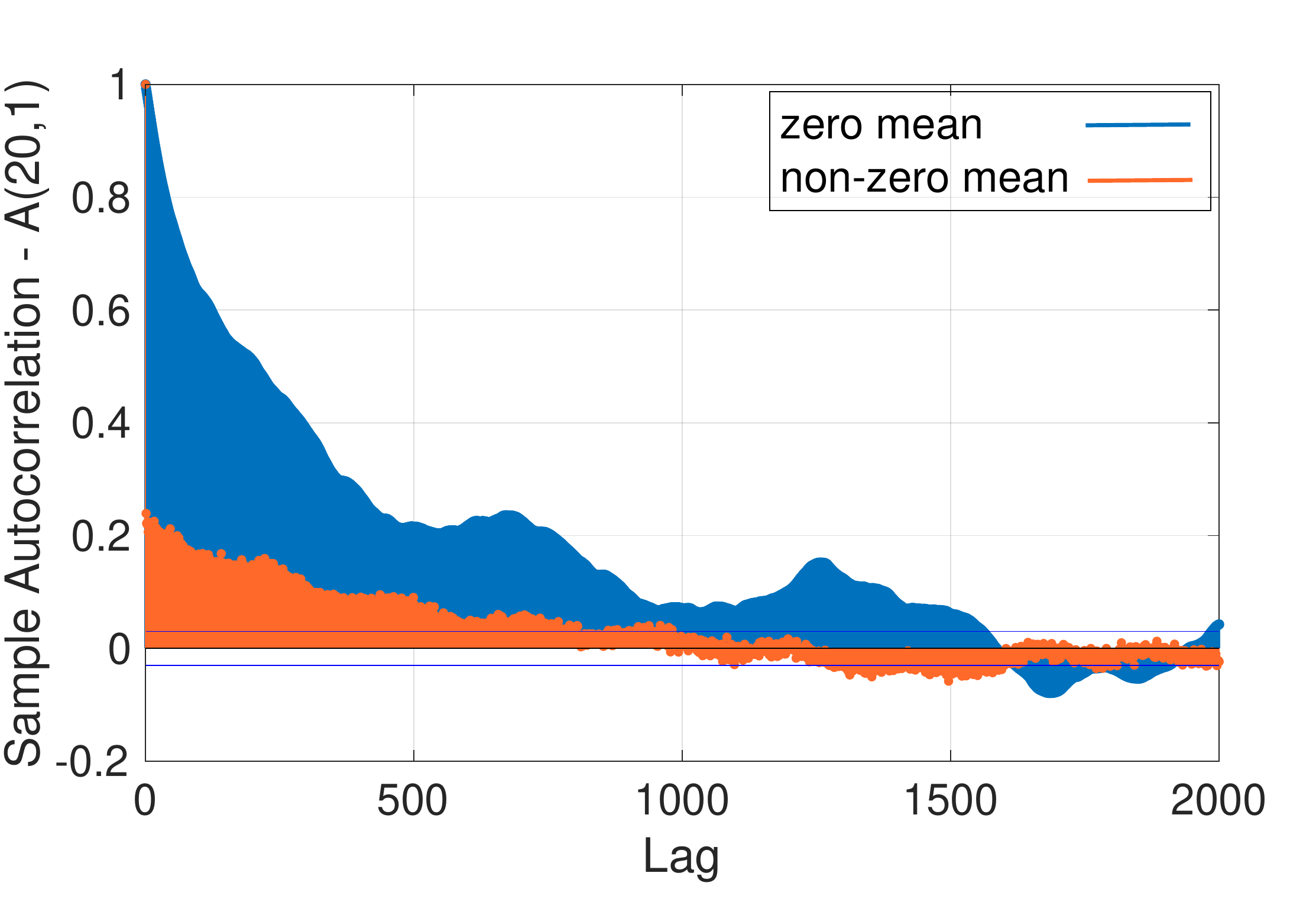}
    \caption{Factor $a_{20,1}$}
    \label{fig:5}
  \end{subfigure}\hfil
  \begin{subfigure}{0.49\textwidth}
    \centering
    \includegraphics[width=0.9\linewidth]{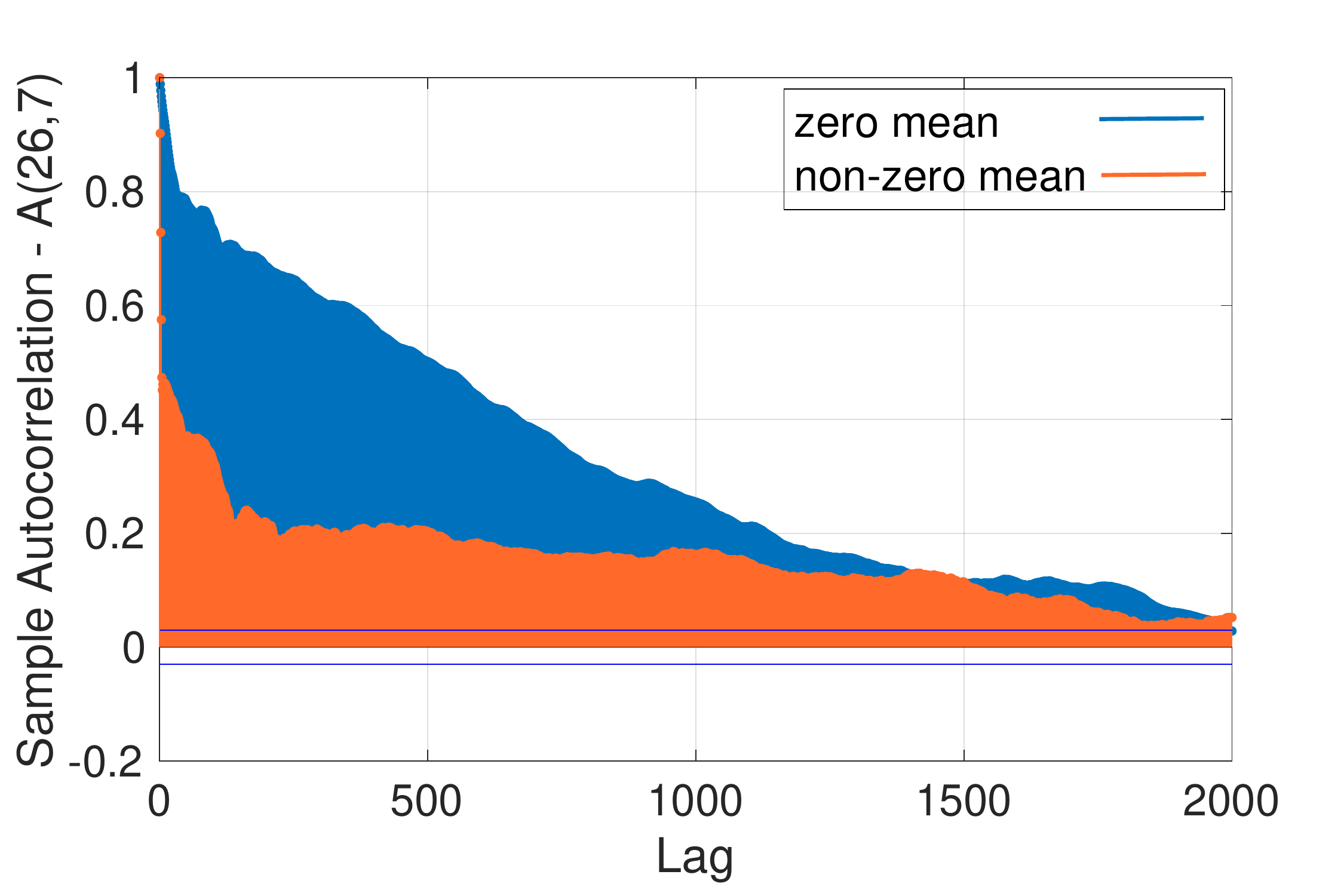}
    \caption{Factor $a_{26,7}$}
    \label{fig:6}
  \end{subfigure}\hfil
  \caption{Autocorrelation for some factors in Example~3 corresponding to zero
  and non-zero mean priors, computed from 3rd chain of Gibbs sampler.}
  \label{fig:fig8}
\end{figure}

\subsection{Example 4: Mice Protein Expression Dataset}

Finally, we apply our theory to the mice-protein expression data set
\citep{higuera2015self} available from the UCI Machine Learning Repository.%
\footnote{https://archive.ics.uci.edu/ml/datasets/Mice+Protein+Expression} The
data set consists of 77 protein expressions, measured in terms of nuclear
fractions, from 1080 mice specimens. For our experiments, we randomly
sub-sampled this $1080 \times 77$ matrix, creating a $50 \times 50$
fully-observed sub-matrix. We then constructed a rank 10 factorization of the
form \eqref{eq:factor-model} using Gibbs sampling while observing only 50\% of
the entries. Sampling was initiated for 4 chains, each with 20000 samples. We
set the parameter values
\begin{equation*}
  \tau_{a,1} = \cdots = \tau_{a,10} = 25, \quad \tau_{b,1} = \cdots =
  \tau_{b,10} = 25, \quad \tau_\eta = 10^2,
\end{equation*}
and the entries of the non-zero prior mean matrices were sampled from the
$\text{Uniform}(-\frac{7}{2}, \frac{7}{2})$ distribution. The first 1000 samples
from all chains were discarded as burn-in. We observe that using non-zero mean
priors leads to improvement of sample autocorrelation for the $b_{5,7}$ factor
in Figure~\ref{fig:mice-ac}. We also see improved reconstruction errors in
Figure~\ref{fig:mice-rmse}---the RMSE histograms were computed from 64
independent repetitions of the experiment described above, with ten-fold sample
thinning.

\begin{figure}
  \centering
  \begin{minipage}{0.49\textwidth}
    \centering
    \includegraphics[width=0.95\linewidth]{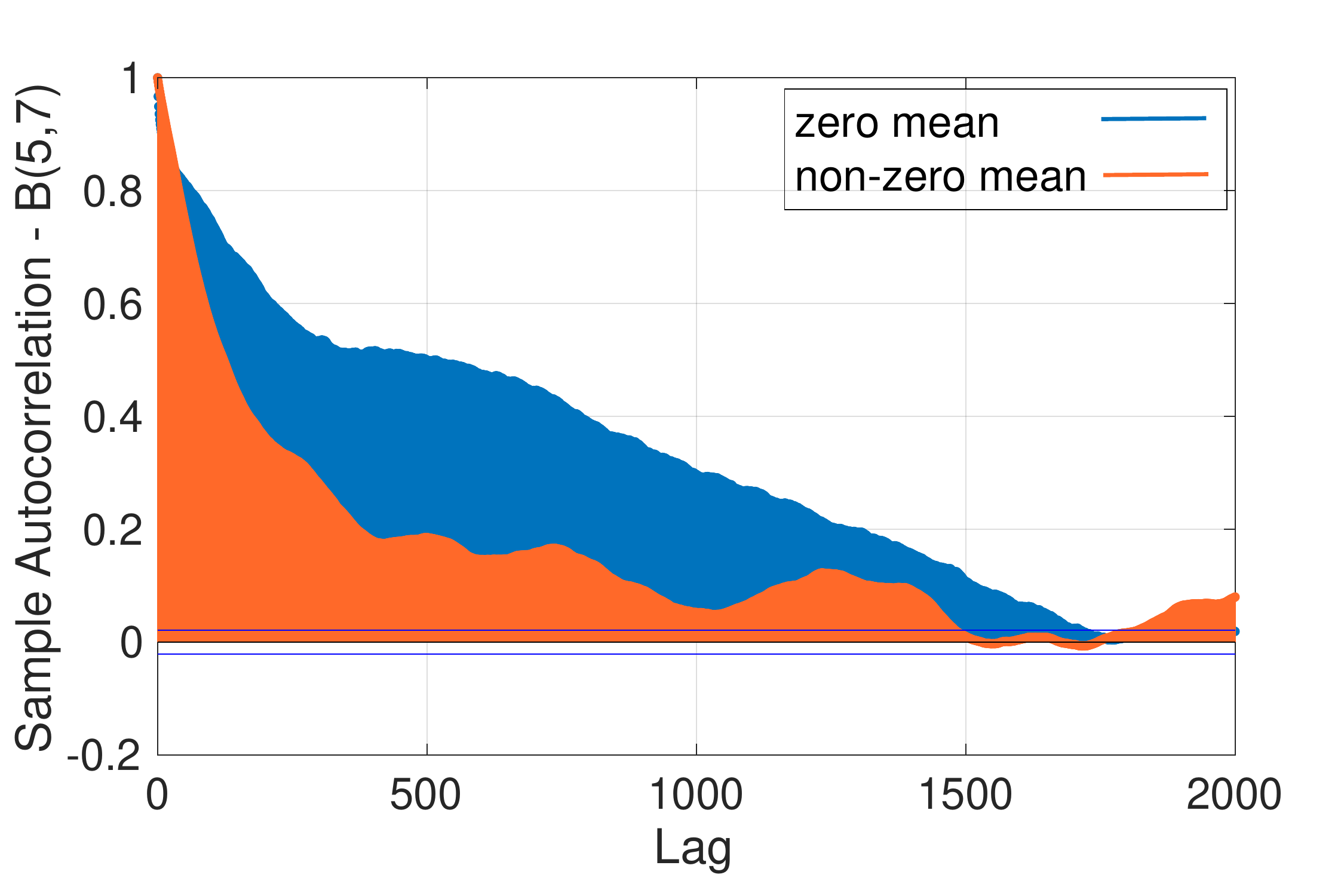}
    \caption{Autocorrelation for factor $b_{5,7}$ in Example~4 corresponding to
    zero/non-zero means, computed from 4th chain of Gibbs sampler.}
    \label{fig:mice-ac}
  \end{minipage}
  ~
  \begin{minipage}{0.43\textwidth}
    \centering
    \includegraphics[width=0.9\linewidth]{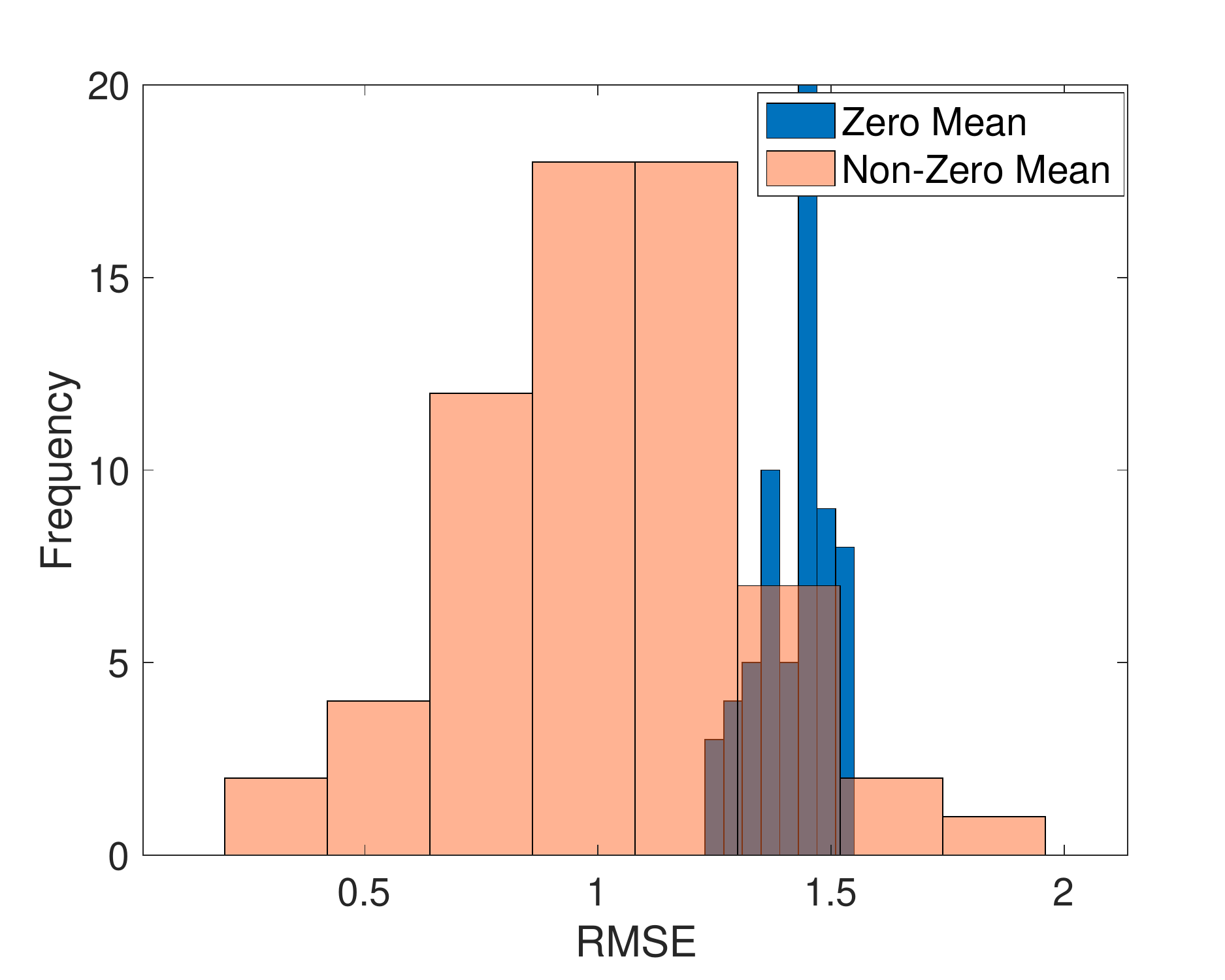}
    \caption{Reconstruction RMSE in Example~4 from 64 Gibbs sampling experiments
    with zero and non-zero mean priors.}
    \label{fig:mice-rmse}
  \end{minipage}
\end{figure}

\section{Conclusion}
\label{sec:conclusion}

We have presented a full theoretical treatment of the symmetries of posteriors
that arise from non-identifiability in Bayesian low-rank matrix factorization
due to the standard choice of Gaussian priors on the matrix factors.  We
established that using a carefully chosen set of prior means, we can eliminate
these symmetries, leading to better performance of MCMC sampling algorithms both
with synthetic and real-world data. In future, we intend to extend this
framework to address similar non-identifiability issues for low-rank tensor
factorizations.

\acks{%
  S.\ De acknowledges support from NSF under grant DMS-1454010 and from the
  Automotive Research Center (ARC) in accordance with Cooperative Agreement
  W56HZV-14-2-0001 with U.S. Army Ground Vehicle Systems Center.

  A.\ Gorodetsky and H.\ Salehi acknowledge support from the Department of
  Energy Office of Scientific Research, ASCR under grant DE-SC0020364.
}

\appendix

\section{Proof of Proposition~\ref{prop:individual-invariance}}
\label{app:individual-invariance}

\noindent
{\bf Proposition~\ref{prop:individual-invariance}}
{\it The posterior corresponding to
  \begin{equation*}
    \begin{split}
      -\log p(\Mat{A}, \Mat{B} \mid \Vec{y}) = \frac{1}{2} \sum_{k = 1}^r
      \tau_{a,k} \Norm{\Vec{a}_k - \Vec{\mu}_{a,k}}^2
      &+ \frac{1}{2} \sum_{k = 1}^r \tau_{b,k} \Norm{\Vec{b}_k -
      \Vec{\mu}_{b,k}}^2 \\
      &+ \frac{\tau_\eta}{2} \sum_{\Vec{\lambda} \in \Lambda} (y_{\Vec{\lambda}}
      - \RowVec{a}_{\lambda_1}^\top \RowVec{b}_{\lambda_2})^2 + \text{const.}
    \end{split}
  \end{equation*}
  is invariant under invertible transformation $\Mat{W} \in \RR^{r \times r}$,
  i.e.
  \begin{equation*}
    p(\Mat{A}, \Mat{B} \mid \Vec{y}) = p(\Mat{A} \Mat{W}, \Mat{B}
    \Mat{W}^{-\top} \mid \Vec{y}) \quad \text{for all} \quad \Mat{A} \in \RR^{m
    \times r} \text{ and } \Mat{B} \in \RR^{n \times r},
  \end{equation*}
  if and only if the terms
  \begin{align*}
    f_1(\Mat{A}) &= \sum_{k = 1}^r \tau_{a,k} \Norm{\Vec{a}_k}^2,
                 &
    f_2(\Mat{A}) &= \sum_{k = 1}^r \tau_{a,k} \Vec{\mu}_{a,k}^\top \Vec{a}_k, \\
    f_3(\Mat{B}) &= \sum_{k = 1}^r \tau_{b,k} \Norm{\Vec{b}_k}^2,
                 &
    f_4(\Mat{B}) &= \sum_{k = 1}^r \tau_{b,k} \Vec{\mu}_{b,k}^\top \Vec{b}_k
  \end{align*}
  are individually invariant under the $\Mat{A} \mapsto \Mat{A} \Mat{W}$ and
  $\Mat{B} \mapsto \Mat{B} \Mat{W}^{-\top}$ transformations.
}

\begin{proof}
  Note that
  \begin{equation*}
    \Norm{\Vec{a}_k - \Vec{\mu}_{a,k}}^2 = \Norm{\Vec{a}_k}^2 - 2
    \Vec{\mu}_{a,k}^\top \Vec{a}_k + \text{const.},
  \end{equation*}
  and similarly
  \begin{equation*}
    \Norm{\Vec{b}_k - \Vec{\mu}_{b,k}}^2 = \Norm{\Vec{b}_k}^2 - 2
    \Vec{\mu}_{b,k}^\top \Vec{b}_k + \text{const.}
  \end{equation*}
  Thus, we can rewrite the negative log posterior as
  \begin{equation}
    \label{eq:posterior-simple}
    -\log p(\Mat{A}, \Mat{B} \mid \Vec{y}) = \frac{1}{2} f_1(\Mat{A}) -
    f_2(\Mat{A}) + \frac{1}{2} f_3(\Mat{B}) - f_4(\Mat{B}) - \log p(\Vec{y} \mid
    \Mat{A}, \Mat{B}) + \text{const.}
  \end{equation}
  From Proposition~\ref{prop:likelihood-invariance}, we see that the
  likelihood term is already invariant under the invertible transformation. It
  follows invariance of $f_1$, $f_2$, $f_3$ and $f_4$ is sufficient for
  invariance of the posterior.

  We now establish that invariance of $f_1$, $f_2$, $f_3$ and $f_4$ is necessary
  for invariance of the posterior. To show this, first note that $f_1$, $f_3$
  are homogeneous of degree two, and $f_2$, $f_4$ are homogeneous of degree one.
  More explicitly, for all $t, s \in \RR$ we have
  \begin{align*}
    f_1(t \Mat{A}) &= t^2 f_1(\Mat{A}),
                   &
    f_2(t \Mat{A}) &= t f_2(\Mat{A}), \\
    f_3(s \Mat{B}) &= s^2 f_3(\Mat{B}),
                   &
    f_4(s \Mat{B}) &= s f_4(\Mat{B}).
  \end{align*}
  Now, fix $\Mat{A}$ and $\Mat{B}$, then for invariance
  we must have $p(t \Mat{A}, s \Mat{B} \mid \Vec{y}) = p(t \Mat{A} \Mat{W}, s
  \Mat{B} \Mat{W}^{-\top} \mid \Vec{y})$ for all $s, t \in \RR$. Expanding this
  out using \eqref{eq:posterior-simple} and applying the homogeneity
  properties, we obtain
  \begin{equation*}
    \frac{t^2}{2} f_1(\Mat{A}) - t f_2(\Mat{A}) + \frac{s^2}{2} f_3(\Mat{B}) - s
    f_4(\Mat{B}) = \frac{t^2}{2} f_1(\Mat{A} \Mat{W}) - t f_2(\Mat{A} \Mat{W}) +
    \frac{s^2}{2} f_3(\Mat{B} \Mat{W}^{-\top}) - s f_4(\Mat{B} \Mat{W}^{-\top}),
  \end{equation*}
  where the likelihood term cancels out. Comparing the coefficients of
  like-powered terms on the both sides, we conclude that we must have
  \begin{align*}
    f_1(\Mat{A}) &= f_1(\Mat{A} \Mat{W})
                 &
    f_2(\Mat{A}) &= f_2(\Mat{A} \Mat{W}) \\
    f_3(\Mat{B}) &= f_3(\Mat{B} \Mat{W}^{-\top})
                 &
    f_4(\Mat{B}) &= f_4(\Mat{B} \Mat{W}^{-\top})
  \end{align*}
  Since $\Mat{A}$ and $\Mat{B}$ are arbitrary, it follows that $f_1$, $f_2$,
  $f_3$ and $f_4$ must be individually invariant under the $\Mat{A} \mapsto
  \Mat{A} \Mat{W}$ and $\Mat{B} \mapsto \Mat{B} \Mat{W}^{-\top}$
  transformations.
\end{proof}

\section{Proof of Theorem~\ref{thm:posterior-symmetry}}
\label{app:posterior-symmetry}

We will use the following result:
\begin{lemma}
  \label{lem:orth}
  Let $\Mat{Q} \in \RR^{r \times r}$ be a matrix satisfying $\Norm{\Mat{Q}
  \Vec{x}} = \Norm{\Vec{x}}$ for all $\Vec{x} \in \RR^r$. Then $\Mat{Q}$ is
  orthogonal.
\end{lemma}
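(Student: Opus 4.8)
The plan is to reduce the norm-preservation hypothesis to the matrix identity $\Mat{Q}^\top \Mat{Q} = \Mat{I}_r$, which is exactly the definition of $\Mat{Q}$ being orthogonal. The first step would be to rephrase the assumption in quadratic-form language: since $\Norm{\Mat{Q} \Vec{x}}^2 = \Vec{x}^\top \Mat{Q}^\top \Mat{Q} \Vec{x}$ and $\Norm{\Vec{x}}^2 = \Vec{x}^\top \Vec{x}$, the hypothesis is equivalent to
\begin{equation*}
  \Vec{x}^\top (\Mat{Q}^\top \Mat{Q} - \Mat{I}_r) \Vec{x} = 0 \quad \text{for all } \Vec{x} \in \RR^r.
\end{equation*}
Writing $\Mat{S} = \Mat{Q}^\top \Mat{Q} - \Mat{I}_r$, I would note that $\Mat{S}$ is symmetric, so the task collapses to the standard fact that a symmetric matrix whose associated quadratic form vanishes identically must be the zero matrix.

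To recover the individual entries of $\Mat{S}$ I would substitute specific vectors. Taking $\Vec{x} = \Vec{e}_i$ gives $s_{ii} = 0$ for every $i$, so the diagonal vanishes; taking $\Vec{x} = \Vec{e}_i + \Vec{e}_j$ with $i \neq j$ gives $s_{ii} + s_{jj} + 2 s_{ij} = 0$ by symmetry, and since the diagonal entries already vanish this forces $s_{ij} = 0$. Hence $\Mat{S} = \Mat{0}$, i.e.\ $\Mat{Q}^\top \Mat{Q} = \Mat{I}_r$, and $\Mat{Q}$ is orthogonal. An equivalent route would be to polarize directly: applying the hypothesis to $\Vec{x} + \Vec{y}$ and expanding $\Norm{\Mat{Q}(\Vec{x} + \Vec{y})}^2 = \Norm{\Vec{x} + \Vec{y}}^2$ yields $\InProd{\Mat{Q} \Vec{x}}{\Mat{Q} \Vec{y}} = \InProd{\Vec{x}}{\Vec{y}}$ for all $\Vec{x}, \Vec{y}$, after which the choices $\Vec{x} = \Vec{e}_i$, $\Vec{y} = \Vec{e}_j$ read off that the $(i,j)$ entry of $\Mat{Q}^\top \Mat{Q}$ matches that of $\Mat{I}_r$ at once.

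There is essentially no serious obstacle here; the one point that requires care is that the scalar identity $\Vec{x}^\top \Mat{S} \Vec{x} = 0$ does \emph{not} on its own force $\Mat{S} = \Mat{0}$ for a general matrix---any skew-symmetric matrix is a counterexample---so the argument genuinely relies on $\Mat{S} = \Mat{Q}^\top \Mat{Q} - \Mat{I}_r$ being symmetric. Once symmetry is invoked, either the basis-substitution bookkeeping or the polarization identity closes the proof immediately.
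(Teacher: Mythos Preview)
Your proposal is correct and matches the paper's proof: the paper uses precisely the polarization route you describe as an alternative, expanding $\Norm{\Mat{Q}(\Vec{x}_1+\Vec{x}_2)}^2 = \Norm{\Vec{x}_1+\Vec{x}_2}^2$ to obtain $\InProd{\Mat{Q}\Vec{x}_1}{\Mat{Q}\Vec{x}_2} = \InProd{\Vec{x}_1}{\Vec{x}_2}$ and then substituting $\Vec{x}_1=\Vec{e}_i$, $\Vec{x}_2=\Vec{e}_j$. Your quadratic-form variant with $\Mat{S} = \Mat{Q}^\top\Mat{Q} - \Mat{I}_r$ is just an equivalent repackaging of the same computation.
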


\begin{proof}
  Let $\Vec{x}_1$ and $\Vec{x}_2$ be two arbitrary vectors. Then
  \begin{equation*}
    \Norm{\Mat{Q} \Vec{x}_1 + \Mat{Q} \Vec{x}_2}^2 = \Norm{\Mat{Q} \Vec{x}_1}^2
    + \Norm{\Mat{Q} \Vec{x}_2}^2 + 2 \InProd{\Mat{Q} \Vec{x}_1}{\Mat{Q}
    \Vec{x}_2}
  \end{equation*}
  and
  \begin{equation*}
    \Norm{\Vec{x}_1 + \Vec{x}_2}^2 = \Norm{\Vec{x}_1}^2 + \Norm{\Vec{x}_2}^2 + 2
    \InProd{\Vec{x}_1}{\Vec{x}_2}
  \end{equation*}
  Using the facts $\Norm{\Mat{Q} \Vec{x}_1 + \Mat{Q} \Vec{x}_2} = \Norm{\Mat{Q}
  (\Vec{x}_1 + \Vec{x}_2)} = \Norm{\Vec{x}_1 + \Vec{x}_2}$, $\Norm{\Mat{Q}
  \Vec{x}_1} = \Norm{\Vec{x}_1}$ and $\Norm{\Mat{Q} \Vec{x}_2} =
  \Norm{\Vec{x}_2}$ we obtain
  \begin{equation*}
    \InProd{\Mat{Q} \Vec{x}_1}{\Mat{Q} \Vec{x}_2} =
    \InProd{\Vec{x}_1}{\Vec{x}_2} \quad \text{for all} \quad \Vec{x}_1,
    \Vec{x}_2 \in \RR^r
  \end{equation*}
  Setting $\Vec{x}_1 = \Vec{e}_i$ and $\Vec{x}_2 = \Vec{e}_j$, we obtain
  \begin{equation*}
    \InProd{\Vec{q}_i}{\Vec{q}_j} = \InProd{\Vec{e}_i}{\Vec{e}_j}
  \end{equation*}
  i.e.\ $\{\Vec{q}_1, \ldots, \Vec{q}_r\}$ form an orthonormal system in
  $\RR^r$, hence $\Mat{Q}$ is orthogonal.
\end{proof}

For convenience, we restate Theorem~\ref{thm:posterior-symmetry} slightly
differently than earlier:

\noindent
{\bf Theorem~\ref{thm:posterior-symmetry}}
{\it
  Let us denote
  \begin{equation*}
    \Mat{T}_a = \Diag(\tau_{a, 1}, \ldots, \tau_{a, r}), \quad \Mat{T}_b =
    \Diag(\tau_{b, 1}, \ldots, \tau_{b, r})
  \end{equation*}
  Let $\{\Lambda_1, \ldots, \Lambda_q\}$ be a partition of $\{1, \ldots, r\}$
  defined by the equivalence relation
  \begin{equation*}
    k, k' \in \Lambda_\ell \iff \tau_{a, k} \tau_{b, k} = \tau_{a, k'} \tau_{b,
    k'}
  \end{equation*}
  Then the posterior given by
  \begin{equation}
    \label{eq:posterior-symmetry-zero}
    -\log p(\Mat{A}, \Mat{B} \mid \Vec{y}) = \frac{1}{2} \sum_{k = 1}^r
    \tau_{a,k} \Norm{\Vec{a}_k}^2 + \frac{1}{2} \sum_{k = 1}^r \tau_{b,k}
    \Norm{\Vec{b}_k}^2 + \frac{\tau_\eta}{2} \sum_{\Vec{\lambda} \in \Lambda}
    (y_{\Vec{\lambda}} - \RowVec{a}_{\lambda_1}^\top \RowVec{b}_{\lambda_2})^2 +
    \text{const.}
  \end{equation}
  is invariant under transformation $(\Mat{A}, \Mat{B}) \mapsto (\Mat{A}
  \Mat{W}, \Mat{B} \Mat{W}^{-\top})$ with invertible $\Mat{W} \in \RR^{r \times
  r}$ if and only if we can decompose
  \begin{equation}
    \label{eq:posterior-symmetry-cond-1}
    \Mat{W} = \Mat{T}_a^{1/2} \Mat{Q} \Mat{T}_a^{-1/2} = \Mat{T}_b^{-1/2}
    \Mat{Q} \Mat{T}_b^{1/2}
  \end{equation}
  where $\Mat{Q}$ is orthogonal and block diagonal w.r.t.\ the partition
  $\{\Lambda_1, \ldots, \Lambda_q\}$, i.e. the sub-matrices
  \begin{equation}
    \label{eq:posterior-symmetry-cond-2}
    \Mat{Q}[\Lambda_{\ell_1}, \Lambda_{\ell_2}] \text{ are }
    \begin{cases}
      \text{orthogonal} & \text{ if } \ell_1 = \ell_2 \\
      \text{zero} & \text{ if } \ell_1 \neq \ell_2
    \end{cases}
  \end{equation}
}

\begin{proof}
  From Proposition~\ref{prop:individual-invariance}, invertible invariance
  \eqref{eq:posterior-symmetry-zero} holds if and only if the terms
  \begin{equation*}
    f_1(\Mat{A}) = \sum_{k = 1}^r \tau_{a, k} \Norm{\Vec{a}_k}^2, \quad
    f_3(\Mat{B}) = \sum_{k = 1}^r \tau_{b, k} \Norm{\Vec{b}_k}^2
  \end{equation*}
  are invariant under the $\Mat{A} \mapsto \Mat{A} \Mat{W}$ and $\Mat{B} \mapsto
  \Mat{B} \Mat{W}^{-\top}$ transformations.

  We divide the rest of the proof into three steps:
  \begin{itemize}
    \item
      Step 1 derives the following condition on $\Mat{W}$ necessary for
      invertible invariance:
      \begin{equation}
        \label{eq:posterior-symmetry-cond-1-alt}
        \Mat{Q}_a = \Mat{T}_a^{-1/2} \Mat{W} \Mat{T}_a^{1/2} \quad \text{and}
        \quad \Mat{Q}_b = \Mat{T}_b^{1/2} \Mat{W} \Mat{T}_b^{-1/2}
      \end{equation}
      must be orthogonal matrices.
    \item
      Step 2 establishes $\Mat{Q}_a = \Mat{Q}_b$, and derives the block diagonal
      structure \eqref{eq:posterior-symmetry-cond-2} on this common
      orthogonal matrix $\Mat{Q}$. This establishes
      \eqref{eq:posterior-symmetry-cond-1} as a necessary condition for
      invertible invariance.
    \item
      Step 3 proves that \eqref{eq:posterior-symmetry-cond-1} is in fact
      sufficient for invertible invariance.
  \end{itemize}

  \noindent{\textbf{Step 1.}} Let us choose
  \begin{equation*}
    \Mat{A} =
    \begin{bmatrix}
      \alpha_1 & \cdots & \alpha_r \\
      0 & \cdots & 0 \\
      \vdots & \ddots & \vdots \\
      0 & \cdots & 0
    \end{bmatrix}_{m \times r}
    \implies
    \Mat{A} \Mat{W} =
    \begin{bmatrix}
      \Vec{\alpha}^\top \\
      \Vec{0}^\top \\
      \vdots \\
      \Vec{0}^\top
    \end{bmatrix}
    \begin{bmatrix}
      \Vec{w}_1 & \cdots & \Vec{w}_r
    \end{bmatrix}
    =
    \begin{bmatrix}
      \Vec{\alpha}^\top \Vec{w}_1 & \cdots & \Vec{\alpha}^\top \Vec{w}_r \\
      0 & \cdots & 0 \\
      \vdots & \ddots & \vdots \\
      0 & \cdots & 0
    \end{bmatrix}_{m \times r}
  \end{equation*}
  for arbitrary $\Vec{\alpha} \in \RR^r$. Then we have
  \begin{align*}
    f(\Mat{A}) &= \sum_{k = 1}^r \tau_{a, k} \alpha_k^2 = \Norm{\Mat{T}_a^{1/2}
    \Vec{\alpha}}^2 \\
    f(\Mat{A} \Mat{W}) &= \sum_{k = 1}^r \tau_{a, k} (\Vec{\alpha}^\top
    \Vec{w}_k)^2 = \sum_{k = 1}^r \tau_{a, k} (\Vec{w}_k^\top \Vec{\alpha})^2 =
    \Norm{\Mat{T}_a^{1/2} \Mat{W}^\top \Vec{\alpha}}^2
  \end{align*}
  These equalities follow from the following observations:
  \begin{equation*}
    \Mat{T}_a^{1/2} \Vec{\alpha} =
    \begin{bmatrix}
      \tau_{a, 1}^{1/2} \alpha_1 \\
      \vdots \\
      \tau_{a, r}^{1/2} \alpha_r
    \end{bmatrix},
    \quad  \Mat{T}_a^{1/2} \Mat{W}^\top \Vec{\alpha} = \Mat{T}_a^{1/2}
    \begin{bmatrix}
      \Vec{w}_1^\top \Vec{\alpha} \\
      \vdots \\
      \Vec{w}_r^\top \Vec{\alpha}
    \end{bmatrix}
    =
    \begin{bmatrix}
      \tau_{a, 1}^{1/2} \Vec{w}_1^\top \Vec{\alpha} \\
      \vdots \\
      \tau_{a, r}^{1/2} \Vec{w}_r^\top \Vec{\alpha}
    \end{bmatrix}
  \end{equation*}
  Thus, invariance of $f_1$ under $\Mat{A} \mapsto \Mat{A} \Mat{W}$
  transformation requires
  \begin{equation*}
    \Norm{\Mat{T}_a^{1/2} \Vec{\alpha}} = \Norm{\Mat{T}_a^{1/2} \Mat{W}^\top
    \Vec{\alpha}} \quad \text{for all} \quad \Vec{\alpha} \in \RR^r
  \end{equation*}
  Set $\Vec{\beta} = \Mat{T}_a^{1/2} \Vec{\alpha}$, then by the invertibility of
  $\Mat{T}_a$ we can rewrite this condition as
  \begin{equation*}
    \Norm{\Vec{\beta}} = \Norm{\underbrace{\Mat{T}_a^{1/2} \Mat{W}^\top
      \Mat{T}_a^{-1/2}}_{\Mat{Q}_a^\top}
    \Vec{\beta}} \quad \text{for all} \quad \Vec{\beta} \in \RR^r
  \end{equation*}
  Lemma~\ref{lem:orth} then implies that $\Mat{Q}_a^\top$, and consequently
  $\Mat{Q}_a$ (as defined in \eqref{eq:posterior-symmetry-cond-1-alt}),
  must be orthogonal. Proceeding in a similar manner, we can show that the
  invariance of $f_3$ under the $\Mat{B} \mapsto \Mat{B} \Mat{W}^{-\top}$
  transformation would require $\Mat{Q}_b$ (again, as defined in
  \eqref{eq:posterior-symmetry-cond-1-alt}) to be orthogonal.

  \noindent{\textbf{Step 2.}} Using
  \eqref{eq:posterior-symmetry-cond-1-alt} we can compute
  \begin{equation*}
    \Mat{Q}_a^{-1} = \Mat{T}_a^{-1/2} \Mat{W}^{-1} \Mat{T}_a^{1/2} \implies
    \Mat{Q}_a^{-\top} = \Mat{T}_a^{1/2} \Mat{W}^{-\top} \Mat{T}_a^{-1/2}
  \end{equation*}
  But since $\Mat{Q}_a$ is orthogonal we have
  \begin{equation}
    \label{eq:posterior-symmetry-eq-1}
    \Mat{Q}_a =\Mat{Q}_a^{-\top} \implies \Mat{T}_a^{-1/2} \Mat{W}
    \Mat{T}_a^{1/2} = \Mat{T}_a^{1/2} \Mat{W}^{-\top} \Mat{T}_a^{-1/2} \implies
    \Mat{W} \Mat{T}_a = \Mat{T}_a \Mat{W}^{-\top}
  \end{equation}
  Similarly, from \eqref{eq:posterior-symmetry-cond-1-alt} and
  orthogonality of $\Mat{Q}_b$, we can derive
  \begin{equation}
    \label{eq:posterior-symmetry-eq-2}
    \Mat{T}_b \Mat{W} = \Mat{W}^{-\top} \Mat{T}_b
  \end{equation}
  Using \eqref{eq:posterior-symmetry-eq-1} and
  \eqref{eq:posterior-symmetry-eq-2} along with associativity of matrix
  multiplication, we obtain
  \begin{equation*}
    \Mat{T}_a \Mat{T}_b \Mat{W} = \Mat{T}_a (\Mat{T}_b \Mat{W})
    \overset{\eqref{eq:posterior-symmetry-eq-2}}{=} \Mat{T}_a
    (\Mat{W}^{-\top} \Mat{T}_b) = (\Mat{T}_a \Mat{W}^{-\top}) \Mat{T}_b
    \overset{\eqref{eq:posterior-symmetry-eq-1}}{=} (\Mat{W}
    \Mat{T}_a) \Mat{T_b} = \Mat{W} \Mat{T}_a \Mat{T}_b
  \end{equation*}
  Now, equating $(i, j)$-th entries of the two boundary matrices in the above
  chain (which are easy to compute given diagonal $\Mat{T}_a$ and $\Mat{T}_b$),
  we get
  \begin{equation*}
    \tau_{a, i} \tau_{b, i} w_{ij} = w_{ij} \tau_{a, j} \tau_{b, j} \quad
    \text{for all} \quad 1 \leq i, j \leq r
  \end{equation*}
  Clearly, if $\tau_{a, i} \tau_{b, i} \neq \tau_{a, j} \tau_{b, j}$ for some
  pair of indices $(i, j)$, then we must have $w_{ij} = 0$. This leads us to the
  block-diagonal structure of $\Mat{W}$ w.r.t.\ partition $\{\Lambda_1, \ldots,
  \Lambda_q\}$, i.e.\ $\Mat{W}[\Lambda_{\ell_1}, \Lambda_{\ell_2}]$ is non-zero
  only if $\ell_1 = \ell_2$. Using this with the diagonal nature of $\Mat{T}_a$
  and $\Mat{T}_b$ in \eqref{eq:posterior-symmetry-cond-1-alt}, we can
  conclude $\Mat{Q}_a$ and $\Mat{Q}_b$ have the same block-diagonal structure.

  Next, for each $1 \leq \ell \leq q$ we have $\tau_{a, i} \tau_{b, i} =
  \tau_{a, j} \tau_{b, j}$ for all $i, j \in \Lambda_\ell$. Let us call this
  common value $c_\ell$, then we have
  \begin{equation*}
    \begin{split}
      \Mat{T}_a[\Lambda_\ell, \Lambda_\ell] \Mat{T}_b[\Lambda_\ell, \Lambda_\ell]
      &= \Diag(\tau_{a, i} : i \in \Lambda_\ell) \Diag(\tau_{b, i} : i \in
      \Lambda_\ell) \\
      &= \Diag(\tau_{a, i} \tau_{b, i} : i \in \Lambda_\ell) \\
      &= \Diag(c_\ell : i \in \Lambda_\ell) \\
      &= c_\ell \Mat{I}_{r_\ell}
    \end{split}
  \end{equation*}
  where we denote $r_\ell = \Card{\Lambda_\ell}$. We conclude
  \begin{equation*}
    \begin{split}
      \Mat{T}_b[\Lambda_\ell, \Lambda_\ell]
      &= c_\ell \Mat{T}_a[\Lambda_\ell, \Lambda_\ell]^{-1} \\
      \implies \Mat{Q}_b[\Lambda_\ell, \Lambda_\ell]
      &= \Mat{T}_b[\Lambda_\ell, \Lambda_\ell]^{1/2} \Mat{W}[\Lambda_\ell,
      \Lambda_\ell] \Mat{T}_b[\Lambda_\ell, \Lambda_\ell]^{-1/2} \\
      &= c_\ell^{1/2} \Mat{T}_a[\Lambda_\ell, \Lambda_\ell]^{-1/2}
      \Mat{W}[\Lambda_\ell, \Lambda_\ell] c_\ell^{-1/2} \Mat{T}_a[\Lambda_\ell,
      \Lambda_\ell]^{1/2} \\
      &= \Mat{T}_a[\Lambda_\ell, \Lambda_\ell]^{-1/2} \Mat{W}[\Lambda_\ell,
      \Lambda_\ell] \Mat{T}_a[\Lambda_\ell, \Lambda_\ell]^{1/2} \\
      &= \Mat{Q}_a[\Lambda_\ell, \Lambda_\ell]
    \end{split}
  \end{equation*}
  Combining these for all the blocks, we obtain $\Mat{Q}_a = \Mat{Q}_b$. We call
  this common value $\Mat{Q}$, and \eqref{eq:posterior-symmetry-cond-1}
  is trivially satisfied.

  \noindent{\textbf{Step 3.}} This part of the
  proof is taken from \citet[Appendix G.3]{nakajima2011theoretical}:

  Note that we can write
  \begin{equation*}
    f_1(\Mat{A}) = \sum_{k = 1}^r \tau_{a, k} \Norm{\Vec{a}_k}^2 =
    \Trace((\Mat{T}_a \Mat{A}^\top) \Mat{A}) = \Trace(\Mat{A} (\Mat{T}_a
    \Mat{A}^\top)) = \Trace(\Mat{A} \Mat{T}_a \Mat{A}^\top)
  \end{equation*}
  where the second equality follows from the following observation:
  \begin{equation*}
    \begin{split}
      (\Mat{T}_a \Mat{A}^\top) \Mat{A}
      &=
      \left(
      \begin{bmatrix}
        \tau_{a, 1} & & \\
                    & \ddots & \\
                    & & \tau_{a, r}
      \end{bmatrix}
      \begin{bmatrix}
        \Vec{a}_1^\top \\
        \vdots \\
        \Vec{a}_r^\top
      \end{bmatrix}
      \right)
      \begin{bmatrix}
        \Vec{a}_1 & \cdots & \Vec{a}_r
      \end{bmatrix} \\
      &=
      \begin{bmatrix}
        \tau_{a, 1} \Vec{a}_1^\top \\
        \vdots \\
        \tau_{a, r} \Vec{a}_r^\top
      \end{bmatrix}
      \begin{bmatrix}
        \Vec{a}_1 & \cdots & \Vec{a}_r
      \end{bmatrix} \\
      &=
      \begin{bmatrix}
        \tau_{a, 1} \Vec{a}_1^\top \Vec{a}_1 & \cdots & \tau_{a, 1}
        \Vec{a}_1^\top \Vec{a}_r \\
        \vdots & \ddots & \vdots \\
        \tau_{a, 1} \Vec{a}_r^\top \Vec{a}_1 & \cdots & \tau_{a, 1}
        \Vec{a}_r^\top \Vec{a}_r
      \end{bmatrix}
    \end{split}
  \end{equation*}
  Now, using $\Mat{W} = \Mat{T}_a^{1/2} \Mat{Q} \Mat{T}_a^{-1/2}$ from
  \eqref{eq:posterior-symmetry-cond-1}, we get
  \begin{equation*}
    \begin{split}
      f_1(\Mat{A} \Mat{W})
      &= \Trace((\Mat{A} \Mat{W}) \Mat{T}_a (\Mat{A} \Mat{W})^\top) \\
      &= \Trace(\Mat{A} \Mat{W} \Mat{T}_a \Mat{W}^\top \Mat{A}^\top) \\
      &= \Trace(\Mat{A} (\Mat{T}_a^{1/2} \Mat{Q} \Mat{T}_a^{-1/2}) \Mat{T}_a
      (\Mat{T}_a^{-1/2} \Mat{Q}^\top \Mat{T}_a^{1/2}) \Mat{A}^\top) \\
      &= \Trace(\Mat{A} \Mat{T}_a^{1/2} \Mat{Q} (\Mat{T}_a^{-1/2} \Mat{T}_a
      \Mat{T}_a^{-1/2}) \Mat{Q}^\top \Mat{T}_a^{1/2} \Mat{A}^\top) \\
      &= \Trace(\Mat{A} \Mat{T}_a^{1/2} (\Mat{Q} \Mat{Q}^\top)
      \Mat{T}_a^{1/2} \Mat{A}^\top) \\
      &= \Trace(\Mat{A} (\Mat{T}_a^{1/2} \Mat{T}_a^{1/2}) \Mat{A}^\top) \\
      &= \Trace(\Mat{A} \Mat{T}_a \Mat{A}^\top) \\
      &= f_1(\Mat{A})
    \end{split}
  \end{equation*}
  We can similarly prove that $f_3(\Mat{B} \Mat{W}^{-\top}) = f_3(\Mat{B})$ with
  $\Mat{W} = \Mat{T}_b^{-1/2} \Mat{Q} \Mat{T}_b^{1/2}$. Thus $f_1$ and $f_3$
  satisfy the desired invariance property.
\end{proof}

\section{Proof of Theorem~\ref{thm:symmetry-breaking}}
\label{app:symmetry-breaking}

We will need of the following lemma:
\begin{lemma}
  \label{lem:uniqueness}
  Let $\Mat{P} \in \RR^{m \times n}$ with $m \geq n$. Then the matrix equation
  \begin{equation*}
    \Mat{P} \Mat{W} = \Mat{P}, \quad \Mat{W} \in \RR^{n \times n} \text{
    orthogonal}
  \end{equation*}
  has the unique solution $\Mat{W} = \Mat{I}$ if and only if $\Mat{P}$ has full
  column rank.
\end{lemma}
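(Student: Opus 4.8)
The plan is to prove the two directions of the equivalence separately, handling the ``full column rank $\Rightarrow$ uniqueness'' implication directly and the converse by contraposition through an explicit construction.

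For the forward implication, suppose $\Mat{P}$ has full column rank. The key observation is that the Gram matrix $\Mat{P}^\top \Mat{P} \in \RR^{n \times n}$ is then invertible: if $\Mat{P}^\top \Mat{P} \Vec{x} = \Vec{0}$ then $\Norm{\Mat{P}\Vec{x}}^2 = \Vec{x}^\top \Mat{P}^\top \Mat{P} \Vec{x} = 0$, forcing $\Mat{P}\Vec{x} = \Vec{0}$ and hence $\Vec{x} = \Vec{0}$ by the rank assumption. Left-multiplying the equation $\Mat{P}\Mat{W} = \Mat{P}$ by $\Mat{P}^\top$ gives $\Mat{P}^\top \Mat{P} \Mat{W} = \Mat{P}^\top \Mat{P}$, and inverting $\Mat{P}^\top \Mat{P}$ yields $\Mat{W} = \Mat{I}$. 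This in fact shows $\Mat{W} = \Mat{I}$ is the unique solution among \emph{all} $n \times n$ matrices, so a fortiori the unique orthogonal one; and $\Mat{I}$ is trivially orthogonal and solves the equation.

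For the converse I would argue the contrapositive: if $\Mat{P}$ does not have full column rank, I exhibit a non-identity orthogonal $\Mat{W}$ with $\Mat{P}\Mat{W} = \Mat{P}$, showing the solution is not unique. Since $\Rank(\Mat{P}) < n$, the kernel $\{\Vec{x} \in \RR^n : \Mat{P}\Vec{x} = \Vec{0}\}$ is nontrivial, so I may choose a unit vector $\Vec{v}$ inside it. The natural candidate is the Householder reflection $\Mat{W} = \Mat{I} - 2 \Vec{v} \Vec{v}^\top$: it is orthogonal, and $\Mat{W} \neq \Mat{I}$ because the rank-one matrix $\Vec{v} \Vec{v}^\top$ is nonzero. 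The verification that it solves the equation is immediate, since for any $\Vec{x}$ the vector $(\Mat{W} - \Mat{I})\Vec{x} = -2(\Vec{v}^\top \Vec{x})\Vec{v}$ lies in the span of $\Vec{v}$, hence in the kernel of $\Mat{P}$, so that $\Mat{P}(\Mat{W} - \Mat{I}) = \Vec{0}$, i.e.\ $\Mat{P}\Mat{W} = \Mat{P}$.

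Both directions are short, and the only genuine ``design'' decision is producing the explicit counterexample in the rank-deficient case; I expect that to be the main (mild) obstacle, since one wants the simplest orthogonal matrix whose action differs from the identity only along a kernel direction, and the rank-one Householder reflection built from a kernel vector accomplishes exactly this. An alternative that also works is to take $\Mat{W}$ block-diagonal with respect to the splitting $\RR^n = \ker(\Mat{P}) \oplus \ker(\Mat{P})^\perp$, acting as the identity on the orthogonal complement and as any nontrivial orthogonal map on the kernel; the Householder choice is simply the cleanest such instance.
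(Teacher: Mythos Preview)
Your proof is correct and follows essentially the same approach as the paper: both directions use the Gram matrix $\Mat{P}^\top\Mat{P}$ for the forward implication and the Householder reflection $\Mat{I} - 2\Vec{v}\Vec{v}^\top$ built from a unit kernel vector for the contrapositive. The only differences are cosmetic (you spell out why $\Mat{P}^\top\Mat{P}$ is invertible and verify $\Mat{P}\Mat{W} = \Mat{P}$ via $\Mat{P}(\Mat{W}-\Mat{I}) = \Vec{0}$ rather than by direct expansion).
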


\begin{proof}
  Multiplying both sides of the matrix equation by $\Mat{P}^\top$ we obtain
  $\Mat{P}^\top \Mat{P} \Mat{W} = \Mat{P}^\top \Mat{P}$. If $\Mat{P}$ has full
  column rank, then $\Mat{P}^\top \Mat{P}$ is invertible, and it follows that
  $\Mat{W} = \Mat{I}$ is the unique solution.

  Conversely, suppose $\Mat{P}$ is not full rank. Then there exists a non-zero
  $\Vec{x} \in \RR^n$ with unit norm such that $\Mat{P} \Vec{x} = \Vec{0}$. Let
  $\Mat{W} = \Mat{I} - 2 \Vec{x} \Vec{x}^\top$, then clearly
  \begin{equation*}
    \Mat{P} \Mat{W} = \Mat{P} (\Mat{I} - 2 \Vec{x} \Vec{x}^\top) = \Mat{P} - 2
    (\Mat{P} \Vec{x}) \Vec{x}^\top = \Mat{P}
  \end{equation*}
  and
  \begin{equation*}
    \Mat{W}^\top \Mat{W} = (\Mat{I} - 2 \Vec{x} \Vec{x}^\top)^\top (\Mat{I} - 2
    \Vec{x} \Vec{x}^\top) = \Mat{I} - 2 \Vec{x} \Vec{x}^\top - 2 \Vec{x}
    \Vec{x}^\top + 4 \Vec{x} (\Vec{x}^\top \Vec{x}) \Vec{x}^\top = \Mat{I}
  \end{equation*}
  since $\Vec{x}^\top \Vec{x} = 1$. Thus we have constructed a second solution
  to the matrix equation.
\end{proof}

We now prove the main result:

\noindent
{\bf Theorem~\ref{thm:symmetry-breaking}}
{\it
  Let $\Mat{T}_a$, $\Mat{T}_b$ and $\{\Lambda_1, \ldots, \Lambda_q\}$ be as
  defined in the statement of Theorem~\ref{thm:posterior-symmetry}. Define the
  prior mean matrices
  \begin{equation*}
    \Mat{M}_a =
    \begin{bmatrix}
      \Vec{\mu}_{a, 1} & \cdots & \Vec{\mu}_{a, r}
    \end{bmatrix}
    \quad \text{and} \quad
    \Mat{M}_b =
    \begin{bmatrix}
      \Vec{\mu}_{b, 1} & \cdots & \Vec{\mu}_{b, r}
    \end{bmatrix}
  \end{equation*}
  Then the posterior $p(\Mat{A}, \Mat{B} \mid \Vec{y})$ is not invariant under
  the $(\Mat{A}, \Mat{B}) \mapsto (\Mat{A} \Mat{W}, \Mat{B} \Mat{W}^{-\top})$
  transformation for any non-identity invertible $r \times r$ matrix $\Mat{W}$
  if and only if the matrices
  \begin{equation*}
    \Mat{P}_\ell =
    \begin{bmatrix}
      \Mat{M}_a[:, \Lambda_\ell] \Mat{T}_a[\Lambda_\ell, \Lambda_\ell]^{1/2} \\
      \Mat{M}_b[:, \Lambda_\ell] \Mat{T}_b[\Lambda_\ell, \Lambda_\ell]^{1/2}
    \end{bmatrix}
  \end{equation*}
  have full column rank for all $1 \leq \ell \leq q$.
}

\begin{proof}
  In this proof, we attempt to reduce the set of all possible $r \times r$
  invertible matrices $\Mat{W}$, for which invertible invariance
  \begin{equation*}
    p(\Mat{A}, \Mat{B} \mid \Vec{y}) = p(\Mat{A} \Mat{W}, \Mat{B}
    \Mat{W}^{-\top} \mid \Vec{y}) \quad \text{for all} \quad \Mat{A} \in \RR^{m
    \times r}, \Mat{B} \in \RR^{n \times r}
  \end{equation*}
  holds, to the singleton $\{\Mat{I}_r\}$. We will demonstrate that this
  reduction is possible if and only if the $\Mat{P}_\ell$ matrices (as defined
  in the theorem statement) have full column rank. We achieve this as follows:
  \begin{itemize}
    \item
      We have already shown in Proposition~\ref{prop:individual-invariance} that
      invertible invariance of the posterior holds if and only if the $f_1$,
      $f_2$, $f_3$ and $f_4$ terms (as defined in the aforementioned
      proposition) are individually invariant under the $\Mat{A} \mapsto \Mat{A}
      \Mat{W}$ and $\Mat{B} \mapsto \Mat{B} \Mat{W}^{-\top}$ transformations.
    \item
      Theorem~\ref{thm:posterior-symmetry} established that in order for the
      $f_1$ and $f_3$ terms to invariant under the transformation above,
      $\Mat{W}$ must have the structure
      \begin{equation*}
        \Mat{W} = \Mat{T}_a^{1/2} \Mat{Q} \Mat{T}_a^{-1/2} = \Mat{T}_b^{-1/2}
        \Mat{Q} \Mat{T}_b^{1/2}
      \end{equation*}
      where $\Mat{Q}$ is block-diagonal w.r.t.\ partition $\{\Lambda_1, \ldots,
      \Lambda_q\}$ as defined in the statement of the aforementioned theorem,
      and the nonzero diagonal blocks $\Mat{Q}[\Lambda_\ell, \Lambda_\ell]$ are
      orthogonal for all $1 \leq \ell \leq q$.
    \item
      In Step 1 below, we consider the terms $f_2$ and $f_4$, and derive simpler
      and equivalent conditions on matrix $\Mat{W}$ (more specifically, the
      matrix $\Mat{Q}$) to ensure invariance under the $\Mat{A} \mapsto \Mat{A}
      \Mat{W}$ and $\Mat{B} \mapsto \Mat{B} \Mat{W}^{-\top}$ transformations.
      These conditions are formulated in terms of the prior means $\Vec{\mu}_{a,
      k}$, $\Vec{\mu}_{b, k}$ and precisions $\tau_{a, k}$, $\tau_{b, k}$ for
      invariance.
    \item
      In Step 2, we further analyze these simpler conditions and frame them as
      matrix equations on diagonal blocks of $\Mat{Q}$.
    \item
      Finally, in Step 3, we will use Lemma~\ref{lem:uniqueness} to demonstrate
      $\Mat{W} = \Mat{I}$ is the only solution of this matrix system if and only
      if the matrices $\Mat{P}_\ell$ are full rank for $1 \leq \ell \leq q$.
  \end{itemize}

  \noindent\textbf{Step 1.} Let us explicitly write out the invariance of $f_2$:
  we have $f_2(\Mat{A} \Mat{W}) = f_2(\Mat{A})$, i.e.
  \begin{align}
    \sum_{k' = 1}^r \tau_{a,k'} \Vec{\mu}_{a,k'}^\top (\Mat{A} \Mat{W})_{k'} &=
    \sum_{k' = 1}^r \tau_{a,k'} \Vec{\mu}_{a,k'}^\top \Vec{a}_{k'} \nonumber \\
    \label{eq:f2-invariance}
    \implies \sum_{k' = 1}^r \tau_{a,k'} \Vec{\mu}_{a,k'}^\top \Mat{A} \Mat{W}
    \Vec{e}_{k'} &= \sum_{k' = 1}^r \tau_{a,k'} \Vec{\mu}_{a,k'}^\top
    \Vec{a}_{k'}
  \end{align}
  for all $\Mat{A}$. Note that we can write
  \begin{equation*}
    \Mat{A} = \sum_{i' = 1}^r \Vec{a}_{i'} \Vec{e}_{i'}^\top
  \end{equation*}
  Substituting this expression on the left hand size of
  \eqref{eq:f2-invariance}, we obtain
  \begin{equation*}
    \begin{split}
      \sum_{k' = 1}^r \tau_{a,k'} \Vec{\mu}_{a,k'}^\top \Mat{A} \Mat{W}
      \Vec{e}_{k'}
      &= \sum_{k' = 1}^r \tau_{a,k'} \Vec{\mu}_{a,k'}^\top \left(\sum_{i' = 1}^r
      \Vec{a}_{i'} \Vec{e}_{i'}^\top\right) \Mat{W} \Vec{e}_{k'} \\
      &= \sum_{k' = 1}^r \tau_{a,k'} \Vec{\mu}_{a,k'}^\top \sum_{i' = 1}^r
      \Vec{a}_{i'} \Vec{e}_{i'}^\top (\Mat{T}_a^{1/2} \Mat{Q} \Mat{T}_a^{-1/2})
      \Vec{e}_{k'} \\
      &= \sum_{k' = 1}^r \tau_{a,k'} \Vec{\mu}_{a,k'}^\top \sum_{i' = 1}^r
      \Vec{a}_{i'} (\Vec{e}_{i'}^\top \Mat{T}_a^{1/2}) \Mat{Q} (\Mat{T}_a^{-1/2}
      \Vec{e}_{k'}) \\
      &= \sum_{k' = 1}^r \tau_{a,k'} \Vec{\mu}_{a,k'}^\top \sum_{i' = 1}^r
      \Vec{a}_{i'} (\tau_{a, i'}^{1/2} \Vec{e}_{i'}^\top) \Mat{Q} (\tau_{a,
      k'}^{-1/2} \Vec{e}_{k'}) \\
      &= \sum_{k' = 1}^r \tau_{a,k'}^{1/2} \Vec{\mu}_{a,k'}^\top \sum_{i' = 1}^r
      \tau_{a, i'}^{1/2} \Vec{a}_{i'} (\Vec{e}_{i'}^\top \Mat{Q} \Vec{e}_{k'})
      \\
      &= \sum_{k' = 1}^r \tau_{a,k'}^{1/2} \Vec{\mu}_{a,k'}^\top \sum_{i' = 1}^r
      \tau_{a, i'}^{1/2} \Vec{a}_{i'} q_{i', k'} \\
      &= \sum_{k' = 1}^r \Vec{\tilde{\mu}}_{a, k'}^\top \sum_{i' = 1}^r
      \Vec{\tilde{a}}_{i'} q_{i', k'}
    \end{split}
  \end{equation*}
  where we denote
  \begin{equation*}
    \Vec{\tilde{\mu}}_{a, k'} = \tau_{a, k'}^{1/2} \Vec{\mu}_{a, k'}, \quad
    \Vec{\tilde{a}}_{k'} = \tau_{a, k'}^{1/2} \Vec{a}_{k'}, \quad k' \in \{1,
    \ldots, r\}
  \end{equation*}
  The right hand side of \eqref{eq:f2-invariance} can be rewritten using
  this notation as
  \begin{equation*}
    \sum_{k' = 1}^r \tau_{a,k'} \Vec{\mu}_{a,k'}^\top \Vec{a}_{k'} = \sum_{k' =
    1}^r \Vec{\tilde{\mu}}_{a,k'}^\top \Vec{\tilde{a}}_{k'}
  \end{equation*}
  These two computations simplifies \eqref{eq:f2-invariance} to
  \begin{equation*}
    \sum_{k' = 1}^r \Vec{\tilde{\mu}}_{a, k'}^\top \sum_{i' = 1}^r
    \Vec{\tilde{a}}_{i'} q_{i', k'} = \sum_{k' = 1}^r \Vec{\tilde{\mu}}_{a,
    k'}^\top \Vec{\tilde{a}}_{k'}
  \end{equation*}
  Switching the order of summation on the left side, changing the summation
  index on the right side, and using $\Vec{\tilde{a}}_{i'}^\top
  \Vec{\tilde{\mu}}_{a, k'} = \Vec{\tilde{\mu}}_{a, k'}^\top
  \Vec{\tilde{a}}_{i'}$, we obtain
  \begin{equation}
    \label{eq:f2-invariance-simple}
    \sum_{i' = 1}^r \Vec{\tilde{a}}_{i'}^\top \sum_{k' = 1}^r
    \Vec{\tilde{\mu}}_{a,k'} q_{i', k'} = \sum_{i' = 1}^r
    \Vec{\tilde{a}}_{i'}^\top \Vec{\tilde{\mu}}_{a, i'}
  \end{equation}
  It has to hold for any arbitrary $\Vec{\tilde{a}}_{i'} \in \RR^{m}$ for $i'
  \in \{1, \ldots, r\}$ (since the columns $\Vec{a}_{i'}$ of matrix $\Mat{A}$
  are arbitrary and $\tau_{a, i'}$ are positive reals). Let us fix $1 \leq i
  \leq r$ and assume all but the $i$-th of these vectors $\Vec{\tilde{a}}_{i'}$
  are zeros. Then \eqref{eq:f2-invariance-simple} reduces to
  \begin{equation*}
    \Vec{\tilde{a}}_i^\top \sum_{k' = 1}^r \Vec{\tilde{\mu}}_{a,k'} q_{i, k'} =
    \Vec{\tilde{a}}_i^\top \Vec{\tilde{\mu}}_{a, i}
  \end{equation*}
  Since this holds for arbitrary $\Vec{\tilde{a}}_i \in \RR^m$, we conclude
  \begin{equation}
    \label{eq:f2-invariance-individual}
    \sum_{k' = 1}^r \Vec{\tilde{\mu}}_{a,k'} q_{i, k'} = \Vec{\tilde{\mu}}_{a,
    i}
  \end{equation}
  Conversely, if \eqref{eq:f2-invariance-individual} holds for all $1
  \leq i \leq r$, then \eqref{eq:f2-invariance-simple} is trivially
  satisfied.

  Let us pause and review our progress. Under the $\Mat{A} \mapsto \Mat{A}
  \Mat{W}$ transformation, where $\Mat{W}$ has the form defined in
  \eqref{eq:posterior-symmetry-cond-1} and
  \eqref{eq:posterior-symmetry-cond-2} (required for invariance of the
  $f_1$ and $f_3$ terms, c.f.\ Theorem~\ref{thm:posterior-symmetry}), the term
  $f_2$ is invariant if and only if identity \eqref{eq:f2-invariance}
  holds if and only if identity \eqref{eq:f2-invariance-simple} holds if
  and only if equation \eqref{eq:f2-invariance-individual} is true.

  Note that $\Mat{W}^{-\top} = \Mat{T}_b^{1/2} \Mat{Q}^{-\top} \Mat{T}_b^{-1/2}
  = \Mat{T}_b^{1/2} \Mat{Q} \Mat{T}_b^{-1/2}$ where the last equality follows
  from orthogonality of $\Mat{Q}$. We can now repeat the same process as above,
  and establish that $f_4$ is invariant under the $\Mat{B} \mapsto \Mat{B}
  \Mat{W}^{-\top}$ transformation if and only if
  \begin{equation}
    \label{eq:f4-invariance-individual}
    \sum_{k' = 1}^r \Vec{\tilde{\mu}}_{b,k'} q_{i, k'} = \Vec{\tilde{\mu}}_{b,
    i}
  \end{equation}
  holds.

  We combine these two arguments, and conclude $f_2$ and $f_4$ are invariant
  (after assuming the conditions \eqref{eq:posterior-symmetry-cond-1} and
  \eqref{eq:posterior-symmetry-cond-2} equivalent to invariances of $f_1$
  and $f_3$) if and only if \eqref{eq:f2-invariance-individual} and
  \eqref{eq:f4-invariance-individual} holds.

  \noindent\textbf{Step 2.} We now frame
  \eqref{eq:f2-invariance-individual} and
  \eqref{eq:f4-invariance-individual} as matrix equations for the
  diagonal blocks of the $\Mat{Q}$ matrix. In
  \eqref{eq:f2-invariance-individual}, let us assume $i \in \Lambda_\ell$
  for some $\ell \in \{1, \ldots, q\}$. Then, since $\Mat{Q}$ is block-diagonal
  w.r.t.\ partitions $\{\Lambda_1, \ldots, \Lambda_\ell\}$, we have
  \begin{equation*}
    q_{i, k'} = 0 \quad \text{for all} \quad k' \not\in \Lambda_\ell
  \end{equation*}
  and \eqref{eq:f2-invariance-individual} further reduces to
  \begin{equation*}
    \sum_{k' \in \Lambda_\ell} \Vec{\tilde{\mu}}_{a, k'} q_{i, k'} =
    \Vec{\tilde{\mu}}_{a, i}
    \implies
    \sum_{k' \in \Lambda_\ell} \tau_{a, k'}^{1/2} \Vec{\mu}_{a, k'} q_{i, k'} =
    \tau_{a, i}^{1/2} \Vec{\mu}_{a, i}
  \end{equation*}
  This is a linear system with unknown $\Mat{Q}[i, \Lambda_\ell]$; in matrix
  form, we can write it as
  \begin{equation}
    \label{eq:f2-invariance-matrix}
    \Mat{M}_a[:, \Lambda_\ell] \Mat{T}_a[\Lambda_\ell, \Lambda_\ell]^{1/2}
    \Mat{Q}[i, \Lambda_\ell]^\top = \tau_{a, i}^{1/2} \Vec{\mu}_{a, i}
  \end{equation}
  We can similarly pose \eqref{eq:f4-invariance-individual} as a matrix
  equation
  \begin{equation}
    \label{eq:f4-invariance-matrix}
    \Mat{M}_b[:, \Lambda_\ell] \Mat{T}_b[\Lambda_\ell, \Lambda_\ell]^{1/2}
    \Mat{Q}[i, \Lambda_\ell]^\top = \tau_{b, i}^{1/2} \Vec{\mu}_{b, i}
  \end{equation}
  Combining \eqref{eq:f2-invariance-matrix} and
  \eqref{eq:f4-invariance-matrix} for all $i \in \Lambda_\ell$, we obtain
  the system
  \begin{equation*}
    \begin{bmatrix}
      \Mat{M}_a[:, \Lambda_\ell] \Mat{T}_a[\Lambda_\ell, \Lambda_\ell]^{1/2} \\
      \Mat{M}_b[:, \Lambda_\ell] \Mat{T}_b[\Lambda_\ell, \Lambda_\ell]^{1/2}
    \end{bmatrix}
    \Mat{Q}[\Lambda_\ell, \Lambda_\ell]^\top
    =
    \begin{bmatrix}
      \Mat{M}_a[:, \Lambda_\ell] \Mat{T}_a[\Lambda_\ell, \Lambda_\ell]^{1/2} \\
      \Mat{M}_b[:, \Lambda_\ell] \Mat{T}_b[\Lambda_\ell, \Lambda_\ell]^{1/2}
  \end{bmatrix}
  \end{equation*}
  Denoting the matrix on the right hand side as $\Mat{P}_\ell$, we obtain
  \begin{equation}
    \label{eq:invariance-cond}
    \Mat{P}_\ell \Mat{Q}[\Lambda_\ell, \Lambda_\ell]^\top = \Mat{P}_\ell
  \end{equation}

  In summary, given the block-diagonal structure of $\Mat{Q}$ from invariance of
  $f_1$ and $f_3$ terms, we have derived matrix equation
  \eqref{eq:invariance-cond} which is equivalent to
  \eqref{eq:f2-invariance-individual} and
  \eqref{eq:f4-invariance-individual}.  These later two conditions are
  both necessary and sufficient for invariance of $f_2$ and $f_4$ to hold.

  \noindent\textbf{Step 3.} By Lemma~\ref{lem:uniqueness}, the solution
  $\Mat{Q}[\Lambda_\ell, \Lambda_\ell] = \Mat{I}_{r_\ell}$ of
  \eqref{eq:invariance-cond} among orthogonal $r_\ell \times r_\ell$
  matrices is unique if and only if the matrix $\Mat{P}_\ell$ has full column
  rank.  Collecting this result for all $\ell \in \{1, \ldots, q\}$, we conclude
  that $\Mat{Q} = \Mat{I}$ is the unique matrix generating the invertible
  invariance matrix $\Mat{W}$ if and only if the matrices $\Mat{P}_\ell$ are
  full rank.

  Finally note that
  \begin{equation*}
    \Mat{Q} = \Mat{I} \implies \Mat{W} = \Mat{T}_a^{1/2} \Mat{Q}
    \Mat{T}_a^{-1/2} = \Mat{T}_a^{1/2} \Mat{I} \Mat{T}_a^{-1/2} = \Mat{I}
  \end{equation*}
  and
  \begin{equation*}
    \Mat{W} = \Mat{I} \implies \Mat{Q} = \Mat{T}_a^{-1/2} \Mat{W}
    \Mat{T}_a^{1/2} = \Mat{T}_a^{-1/2} \Mat{I} \Mat{T}_a^{1/2} = \Mat{I}
  \end{equation*}
  Thus $\Mat{Q} = \Mat{I}$ if and only if $\Mat{W} = \Mat{I}$ and we conclude
  our proof.
\end{proof}

\vskip 0.2in
\bibliography{references}

\end{document}